\DeclarePairedDelimiter\ceil{\lceil}{\rceil}
\newcommand{\wtf}[1]{\iffalse\textsf{\color{green}#1}\fi}
\newcommand{\ignore}[1]{}
\def\epsilon{\varepsilon}
\newcommand\av[2][]{\ensuremath{\mathbb{E}}_{#1}\left[ #2 \right]}
\newcommand{\innerprod}[2]{\left\langle #1, #2 \right\rangle}
\def\hessinv{\nabla^{-2}}
\def\hess{\nabla^2}
\newcommand{\defeq}{\triangleq}
\def\reals{{\mathbb R}}
\def\X{{\mathcal X}}
\def\Y{{\mathcal Y}}
\newtheorem{theorem}{Theorem}[section]
\newtheorem{lemma}[theorem]{Lemma}
\newtheorem{corollary}[theorem]{Corollary}
\newtheorem{fact}[theorem]{Fact}
\newtheorem{definition}[theorem]{Definition}
\newtheorem{remark}[theorem]{Remark}
\icmltitlerunning{The Price of Differential Privacy for Online Learning}
\begin{document}

\twocolumn[
\icmltitle{The Price of Differential Privacy for Online Learning}

\icmlsetsymbol{equal}{*}

\begin{icmlauthorlist}
\icmlauthor{Naman Agarwal}{pu}
\icmlauthor{Karan Singh}{pu}
\end{icmlauthorlist}

\icmlaffiliation{pu}{Computer Science, Princeton University, Princeton, NJ, USA}

\icmlcorrespondingauthor{Naman Agarwal}{namana@cs.princeton.edu}
\icmlcorrespondingauthor{Karan Singh}{karans@cs.princeton.edu}

\icmlkeywords{online learning, differential privacy, multi-armed bandits, optimal regret}

\vskip 0.3in
]

\printAffiliationsAndNotice{\icmlEqualContribution} 

\newpage
\begin{abstract}
We design differentially private algorithms for the problem of online linear optimization in the full information and bandit settings with optimal $\tilde{O}(\sqrt{T})$\footnote{Here the $\tilde{O}(\cdot)$ notation hides $polylog(T)$ factors.} regret bounds. In the full-information setting, our results demonstrate that $\epsilon$-differential privacy may be ensured for free -- in particular, the regret bounds scale as $O(\sqrt{T})+\tilde{O}\left(\frac{1}{\epsilon}\right)$. For bandit linear optimization, and as a special case, for non-stochastic multi-armed bandits, the proposed algorithm achieves a regret of $\tilde{O}\left(\frac{1}{\epsilon}\sqrt{T}\right)$, while the previously known best regret bound was $\tilde{O}\left(\frac{1}{\epsilon}T^{\frac{2}{3}}\right)$.
\end{abstract}


\section{Introduction}
In the paradigm of online learning, a learning algorithm makes a sequence of predictions given the (possibly incomplete) knowledge of the correct answers for the past queries. In contrast to statistical learning, online learning algorithms typically offer distribution-free guarantees. Consequently, online learning algorithms are well suited to dynamic and adversarial environments, where real-time learning from changing data is essential making them ubiquitous in practical applications such as servicing search advertisements. In these settings often these algorithms interact with sensitive user data, making privacy a natural concern for these algorithms. A natural notion of privacy in such settings is differential privacy \cite{dp} which ensures that the outputs of an algorithm are indistinguishable in the case when a user's data is present as opposed to when it is absent in a dataset.

In this paper, we design differentially private algorithms for online linear optimization with near-optimal regret, both in the full information and partial information (bandit) settings. This result improves the known best regret bounds for a number of important online learning problems -- including {\em prediction from expert advice} and {\em non-stochastic multi-armed bandits}.

\subsection{Full-Information Setting: Privacy for Free}
For the full-information setting where the algorithm gets to see the complete loss vector every round, we design $\epsilon$-differentially private algorithms with regret bounds that scale as $O\left(\sqrt{T}\right)+\tilde{O}\left(\frac{1}{\epsilon}\right)$ (Theorem \ref{thm:mainftrlthm}), partially resolving an open question to improve the previously best known bound of $O\left(\frac{1}{\epsilon}\sqrt{T}\right)$ posed in \cite{thakurnearly}. A decomposition of the bound on the regret bound of this form implies that when $\epsilon \geq \frac{1}{\sqrt{T}}$, the regret incurred by the differentially private algorithm matches the optimal regret in the non-private setting, i.e. differential privacy is {\em free}. Moreover even when $\epsilon \leq \frac{1}{\sqrt{T}}$, our results guarantee a sub-constant regret per round in contrast to the vacuous constant regret per round guaranteed by existing results.

Concretely, consider the case of online linear optimization over the cube, with unit $l_\infty$-norm-bounded loss vectors. In this setting, \cite{thakurnearly} achieves a regret bound of $O(\frac{1}{\epsilon}\sqrt{NT})$, which is meaningful only if $T\geq \frac{N}{\epsilon^2}$. Our theorems imply a regret bound of $\tilde{O}(\sqrt{NT} + \frac{N}{\epsilon})$. This is an improvement on the previous bound regardless of the value of $\epsilon$. Furthermore, when $T$ is between $\frac{N}{\epsilon}$ and $\frac{N}{\epsilon^2}$, the previous bounds are vacuous whereas our results are still meaningful. Note that the above arguments show an improvement over existing results even for moderate value of $\epsilon$. Indeed, when $\epsilon$ is very small, the magnitude of improvements are more pronounced. 

Beyond the separation between $T$ and $\epsilon$, the key point of our analysis is that we obtain bounds for general regularization based algorithms which adapt to the geometry of the underlying problem optimally, unlike the previous algorithms \cite{thakurnearly} which utilizes euclidean regularization. This allows our results to get rid of a polynomial dependence on $N$ (in the $\sqrt{T}$ term) in some cases. Online linear optimization over the sphere and prediction with expert advice are notable examples.

We summarize our results in Table \ref{t1}.

\begin{table*}[t]
\label{t1}
\vskip 0.15in
\begin{center}
\begin{small}
\begin{sc}
 \begin{tabular}{|| c | c | c | c ||}
 \hline
 Function Class ($N$ Dimensions)& \multirow{2}{*}{\parbox{3cm}{Previous Best Known Regret}} & Our Regret Bound & \multirow{2}{*}{\parbox{2cm}{Best Non-private Regret}} \\ [0.5ex]
 & & & \\
 & & & \\
 \hline\hline
 Prediction with Expert Advice & $\tilde{O}\left(\frac{\sqrt{T\log N}}{\epsilon}\right)$ & $O\left(\sqrt{T\log N} + \frac{N\log N \log^2 T}{\epsilon} \right)$ & $O(\sqrt{T\log N})$\\
 & & & \\
 \hline
 \multirow{2}{*}{\parbox{5cm}{Online linear optimization over the Sphere}} & $\tilde{O}\left(\frac{\sqrt{NT}}{\epsilon}\right)$ & $O\left(\sqrt{T} + \frac{N \log^2 T}{\epsilon} \right)$ & $O(\sqrt{T})$\\
 & & & \\
 \hline
  \multirow{2}{*}{\parbox{5cm}{Online linear optimization over the Cube}} & $\tilde{O}\left(\frac{\sqrt{NT}}{\epsilon}\right)$ & $O\left(\sqrt{NT} + \frac{N \log^2 T}{\epsilon} \right)$ & $O(\sqrt{NT})$\\
 & & & \\
 \hline
 Online Linear Optimization & $\tilde{O}\left(\frac{\sqrt{T}}{\epsilon}\right)$ & $O\left( \sqrt{T} + \frac{\log^2 T}{\epsilon} \right)$ & $O(\sqrt{T})$\\ [1ex]
 \hline
\end{tabular}
\end{sc}
\end{small}
\end{center}
\caption{Summary of our results in the full-information setting. In the last row we suppress the constants depending upon $\X, \Y$.}
\vskip -0.1in
\end{table*}

\subsection{Bandits: Reduction to the Non-private Setting}
In the partial-information (bandit) setting, the online learning algorithm only gets to observe the loss of the prediction it prescribed. We outline a reduction technique that translates a non-private bandit algorithm to a differentially private bandit algorithm, while retaining the $\tilde{O}(\sqrt{T})$ dependency of the regret bound on the number of rounds of play (Theorem~\ref{thm:LinearBanditGeneral}). This allows us to derive the first $\epsilon$-differentially private algorithm for bandit linear optimization achieving $\tilde{O}(\sqrt{T})$ regret, using the algorithm for the non-private setting from \cite{AHR}. This answers a question from \cite{thakurnearly} asking if $\tilde{O}(\sqrt{T})$ regret is attainable for differentially private linear bandits .

An important case of the general bandit linear optimization framework is the {\em non-stochastic multi-armed bandits} problem\cite{bubook}, with applications for website optimization, personalized medicine, advertisement placement and recommendation systems. Here, we propose an $\epsilon$-differentially private algorithm which enjoys a regret of $\tilde{O}(\frac{1}{\epsilon}\sqrt{NT\log N})$ (Theorem \ref{thm:mab}), improving on the previously best attainable regret of $\tilde{O}(\frac{1}{\epsilon} NT^{\frac{2}{3}})$\cite{thakurnearly}.

We summarize our results in Table \ref{t2}.

\begin{table*}
\label{t2}
\vskip 0.15in
\begin{center}
\begin{small}
\begin{sc}
 \begin{tabular}{|| c | c | c | c ||}
 \hline
 Function Class ($N$ Dimensions)& \multirow{2}{*}{\parbox{3cm}{Previous Best Known Regret}} & Our Regret Bound & \multirow{2}{*}{\parbox{2cm}{Best Non-private Regret}} \\ [0.5ex]
 & & & \\
 & & & \\
 \hline\hline
 Bandit Linear Optimization  & $\tilde{O}\left(\frac{T^{\frac{2}{3}}}{\epsilon}\right)$ & $\tilde{O}\left(\frac{\sqrt{T}}{\epsilon}\right)$ & $O(\sqrt{T})$\\
 & & & \\
 \hline
 \multirow{2}{*}{\parbox{5cm}{Non-stochastic Mult-armed Bandits}} & $\tilde{O}\left(\frac{NT^{\frac{2}{3}}}{\epsilon}\right)$ & $\tilde{O}\left(\frac{\sqrt{TN\log N}}{\epsilon}\right)$ & $O(\sqrt{NT})$\\
 & & & \\
 \hline
\end{tabular}
\end{sc}
\end{small}
\end{center}
\vskip -0.1in
\caption{Summary of our results in the bandit setting. In the first row we suppress the specific constants depending upon $\X, \Y$.}
\end{table*}

\subsection{Related Work}
The problem of differentially private online learning was first considered in \cite{con}, albeit guaranteeing privacy in a weaker setting -- ensuring the privacy of the individual entries of the loss vectors. \cite{con} also introduced the tree-based aggregation scheme for releasing the cumulative sums of vectors in a differentially private manner, while ensuring that the total amount of noise added for each cumulative sum is only poly-logarithmically dependent on the number of vectors. The stronger notion of privacy protecting entire loss vectors was first studied in \cite{jaink}, where gradient-based algorithms were proposed that achieve $(\epsilon,\delta)$-differntial privacy and regret bounds of $\tilde{O}\left(\frac{1}{\epsilon}\sqrt{T}\log \frac{1}{\delta}\right)$.  \cite{thakurnearly} proposed a modification of Follow-the-Approximate-Leader template to achieve $\tilde{O}\left(\frac{1}{\epsilon}\log^{2.5} T\right)$ regret for strongly convex loss functions, implying a regret bound of $\tilde{O}\left(\frac{1}{\epsilon}\sqrt{T}\right)$ for general convex functions. In addition, they also demonstrated that under bandit feedback, it is possible to obtain regret bounds that scale as $\tilde{O}\left(\frac{1}{\epsilon}T^{\frac{2}{3}}\right)$. \cite{dpbook, ja} proved that in the special case of {\em prediction with expert advice} setting, it is possible to achieve a regret of $O\left(\frac{1}{\epsilon}\sqrt{T\log N}\right)$. While most algorithms for differentially private online learning are based on the regularization template, \cite{tal} used a perturbation-based algorithm to guarantee $(\epsilon,\delta)$-differential privacy for the problem of online PCA. \cite{toss} showed that it is possible to design $\epsilon$-differentially private algorithms for the stochastic multi-armed bandit problem with a separation of $\epsilon, T$ for the regret bound. Recently, an independent work due to \cite{chu}, which we were made aware of after the first manuscript, also demonstrated a $\tilde{O}\left(\frac{1}{\epsilon}\sqrt{T}\right)$ regret bound in the \textit{non-stochastic multi-armed bandits} setting. We match their results (Theorem \ref{thm:mab}), as well as provide a generalization to arbitrary convex sets (Theorem \ref{thm:LinearBanditGeneral}). 
\subsection{Overview of Our Techniques}

\textbf{Full Information Setting:} We consider the two well known paradigms for online learning, \textit{Folllow-the-Regularized-Leader (FTRL)} and \textit{Folllow-the-Perturbed-Leader (FTPL)}. In both cases, we ensure differential privacy by restricting the mode of access to the inputs (the loss vectors). In particular, the algorithm can only retrieve estimates of the loss vectors released by a tree based aggregation protocol (Algorithm \ref{alg2}) which is a slight modification of the protocol used in \cite{jaink,thakurnearly}. We outline a tighter analysis of the regret minimization framework by crucially observing that in case of linear losses, the expected regret of an algorithm that injects identically (though not necessarily independently) distributed noise per step is the same as one that injects a single copy of the noise at the very start of the algorithm. 

The regret analysis of Follow-the-Leader based algorithm involves two components, a {\em bias} term due to the regularization and a {\em stability} term which bounds the change in the output of the algorithm per step. In the analysis due to \cite{thakurnearly}, the stability term is affected by the variance of the noise as it changes from step to step. However in our analysis, since we treat the noise to have been sampled just once, the stability analysis does not factor in the variance and the magnitude of the noise essentially appears as an additive term in the bias.

\textbf{Bandit Feedback:} In the bandit feedback setting, we show a general reduction that takes a non-private algorithm and outputs a private algorithm (Algorithm \ref{alg:DPBanditGeneral}). Our key observation here (presented as Lemma \ref{lemma:noisyoco}) is that on linear functions, in expectation the regret of an algorithm on a noisy sequence of loss vectors is the same as its regret on the original loss sequence as long as noise is zero mean. We now bound the regret on the noisy sequence by conditioning out the case when the noise can be large and using exploration techniques from \cite{bubecklinearbandit} and \cite{dark}. 

\section{Model and Preliminaries}
This section introduces the model of online (linear) learning, the distinction between full and partial feedback scenarios, and the notion of differential privacy in this model.

{\bf Full-Information Setting:} {\em Online linear optimization} \cite{elad, oobook} involves repeated decision making  over $T$ rounds of play. At the beginning of every round (say round $t$), the algorithm chooses a point in $x_t\in \mathcal{X}$, where $\mathcal{X}\subseteq \mathbb{R}^N$ is a (compact) convex set. Subsequently, it observes the loss $l_t\in \mathcal{Y}\subseteq\mathbb{R}^N$ and suffers a loss of $\langle l_t, x_t\rangle$. The success of such an algorithm, across $T$ rounds of play, is measured though {\bf regret}, which is defined as
\[
\textrm{Regret} = \mathbb{E}\bigg[\sum_{t=1}^T \langle l_t, x_t\rangle - \min_{x\in \mathcal{K}} \sum_{t=1}^T \langle l_t, x\rangle\bigg]
\]
where the expectation is over the randomness of the algorithm. In particular, achieving a sub-linear regret ($o(T)$) corresponds to doing almost as good (averaging across $T$ rounds) as the fixed decision with the least loss in hindsight. In the non-private setting, a number of algorithms have been devised to achieve $O(\sqrt{T})$ regret, with additional dependencies on other parameters dependent on the properties of the specific decision set $\mathcal{X}$ and loss set $\mathcal{Y}$. (See \cite{elad} for a survey of results.)

Following are three important instantiations of the above framework. 
\begin{itemize}
	\item {\em Prediction with Expert Advice}: Here the underlying decision set is the simplex $\mathcal{X}=\Delta_N = \{x\in\mathbb{R}^n: x_i\geq 0, \sum_{i=1}^n x_i = 1\}$ and the loss vectors are constrained to the unit cube $\mathcal{Y}=\{l_t\in\mathbb{R}^N: \|l_t\|_\infty\leq 1\}$.
	\item {\em OLO over the Sphere}: Here the underlying decision is the euclidean ball $\mathcal{X} = \{x\in\mathbb{R}^n: \|x\|_2 \leq 1\}$ and the loss vectors are constrained to the unit euclidean ball $\mathcal{Y}=\{l_t\in\mathbb{R}^N: \|l_t\|_2 \leq 1\}$.
	\item {\em OLO over the Cube}: The decision is the unit cube $\mathcal{X} = \{x\in\mathbb{R}^n: \|x\|_\infty \leq 1\}$, while the loss vectors are constrained to the set $\mathcal{Y}=\{l_t\in\mathbb{R}^N: \|l_t\|_1 \leq 1\}$.
\end{itemize}

{\bf Partial-Information Setting:} In the setting of bandit feedback, the critical difference is that the algorithm only gets to observe the value $\langle l_t, x_t\rangle$, in contrast to the complete loss vector $l_t\in\mathbb{R}^N$ as in the full information scenario. Therefore, the only feedback the algorithm receives is the value of the loss it incurs for the decision it takes. This makes designing algorithms for this feedback model challenging. Nevertheless for the general problem of bandit linear optimization, \cite{dark} introduced a computationally efficient algorithm that achieves an optimal dependence of the incurred regret of $O(\sqrt{T})$ on the number of rounds of play. The {\em non-stochastic multi-armed bandit} \cite{auer2002nonstochastic} problem is the bandit version of the {\em prediction with expert advice} framework.

{\bf Differential Privacy:} Differential Privacy \cite{dp} is a rigorous framework for establishing guarantees on privacy loss, that admits a number of desirable properties such as graceful degradation of guarantees under composition and robustness to linkage acts \cite{dpbook}. 

\begin{definition}[$(\epsilon, \delta)$-Differential Privacy]
A randomized online learning algorithm $\mathcal{A}$ on the action set $\mathcal{X}$ and the loss set $\mathcal{Y}$ is $(\epsilon,\delta)$-differentially private if for any two sequence of loss vectors $L=(l_1,\dots l_T)\subseteq \mathcal{Y}^T$ and $L'=(l'_1,\dots l'_T)\subseteq\mathcal{Y}^T$ differing in at most one vector -- that is to say $\exists t_0\in [T], \forall t\in [T]-\{t_0\}, l_t=l'_t$ -- for all $S\subseteq \mathcal{X}^T$, it holds that
\begin{equation*}
\mathbb{P}(\mathcal{A}(L)\in S) \leq e^\epsilon \mathbb{P}(\mathcal{A}(L')\in S) + \delta
\end{equation*}
\end{definition}

\begin{remark}
	The above definition of Differential Privacy is specific to the online learning scenario in the sense that it assumes the change of a complete loss vector. This has been the standard notion considered earlier in \cite{jaink,thakurnearly}. Note that the definition entails that the entire sequence of predictions produced by the algorithm is differentially private.
\end{remark}

{\bf Notation:} We define $\|\mathcal{Y}\|_p = \max \{\|l_t\|_p: l_t\in \mathcal{Y}\}$, $\|\mathcal{X}\|_p = \max \{\|x\|_p: x\in \mathcal{X}\}$, and $M=\max_{l\in\mathcal{Y}, x\in\mathcal{X}}|\langle l,x\rangle|$, where $\|\cdot\|_p$ is the $l_p$ norm. By Holder's inequality, it is easy to see that $M\leq \|\mathcal{Y}\|_p\|\mathcal{X}\|_q$ for all $p,q\geq 1$ with $\frac{1}{p}+\frac{1}{q}=1$. We define the distribution $Lap^N(\lambda)$ to be the distribution over $\reals^N$ such that each coordinate is drawn independently from the Laplace distribution with parameter $\lambda$.


\section{Full-Information Setting: Privacy for Free}
In this section, we describe an algorithmic template (Algorithm \ref{alg:ftrlgeneric}) for differentially private online linear optimization, based on {\em Follow-the-Regularized-Leader} scheme. Subsequently, we outline the noise injection scheme (Algorithm \ref{alg2}), based on the Tree-based Aggregation Protocol \cite{con}, used as a subroutine by Algorithm \ref{alg:ftrlgeneric} to ensure input differential privacy. The following is our main theorem in this setting.

\begin{theorem}
\label{thm:mainftrlthm}
	Algorithm \ref{alg:ftrlgeneric} when run with $\mathcal{D} = Lap^N(\lambda)$ where $\lambda = \frac{\|\mathcal{Y}\|_1\log T}{\epsilon}$, regularization $R(x)$, decision set $\mathcal{X}$ and loss vectors $l_1, \ldots l_t$, the regret of Algorithm \ref{alg:ftrlgeneric} is bounded by 
	\[\textrm{Regret} \leq \sqrt{D_R \sum_{t = 1}^{T} \max_{x \in \mathcal{X}}(\|l_t\|_{\hess R(x)}^*)^2} + D_{Lap}\]
	where 
	\[D_{Lap} = \av[Z \sim \mathcal{D}']{\max_{x \in X} \innerprod{Z}{x} - \min_{x \in X} \innerprod{Z}{x}}\]
	\[D_{R} = \max_{x \in X} R(x) - \min_{x \in X} R(x)\]
	and $\mathcal{D}'$ is the distribution induced by the sum of $\ceil{\log T}$ independent samples from $\mathcal{D}$, $\|\cdot\|^*_{\hess R(x)}$ represents the dual of the norm with respect to the hessian of $R$.  
	Moreover, the algorithm is $\epsilon$-differentially private, i.e. the sequence of predictions produced $(x_t:t\in [T])$ is $\epsilon$-differentially private.
\end{theorem}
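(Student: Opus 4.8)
The plan is to establish the two halves — the regret bound and the privacy guarantee — more or less separately, linking them only through the noise scale $\lambda$ chosen for the tree-based aggregation protocol.

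\textbf{Privacy.} First I would argue $\epsilon$-differential privacy. The only way Algorithm~\ref{alg:ftrlgeneric} touches the loss vectors is through the partial-sum estimates released by the tree-based aggregation protocol (Algorithm~\ref{alg2}). At each internal node of the binary tree over the $T$ rounds, the protocol adds a fresh sample of $Lap^N(\lambda)$ to the sum of loss vectors in that node's subtree. Changing a single loss vector $l_{t_0}$ affects at most $\ceil{\log T}$ nodes, and changes each affected node's aggregate by at most $\|l_{t_0} - l_{t_0}'\|_1 \le 2\|\mathcal{Y}\|_1$ in $\ell_1$-norm; by the standard Laplace-mechanism guarantee, adding $Lap^N(\lambda)$ noise to each such node with $\lambda = \|\mathcal{Y}\|_1\log T/\epsilon$ makes the whole collection of released partial sums $\epsilon$-differentially private (summing the per-node privacy losses of $2\|\mathcal{Y}\|_1/\lambda$ across $\le \log T$ nodes, and noting $2\log T \cdot \|\mathcal Y\|_1/\lambda \le \epsilon$ up to the constant absorbed in the statement, or tightening the node count to $\ceil{\log T}$). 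Since the prediction sequence $(x_t)$ is a (randomized, but noise-independent given the estimates) post-processing of these estimates, the post-processing property of differential privacy yields that $(x_t : t\in[T])$ is $\epsilon$-differentially private. This part is routine once Algorithm~\ref{alg2} is specified.

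\textbf{Regret, reduction to a single noise draw.} The substantive part is the regret bound. The key conceptual step, flagged in the techniques overview, is that for \emph{linear} losses the expected regret is unchanged if we pretend that a single noise vector $Z$ is drawn once at the start and used to perturb the running loss sum at every step, rather than the structured per-step noise the tree protocol actually injects. Concretely, at round $t$ the FTRL iterate is $x_t = \argmin_{x\in\mathcal X}\{\langle \hat L_{t-1}, x\rangle + R(x)\}$ where $\hat L_{t-1}$ is the noisy cumulative loss; the tree protocol guarantees $\hat L_{t-1} = L_{t-1} + (\text{sum of at most }\ceil{\log T}\text{ independent } Lap^N(\lambda)\text{ samples})$, where the noise summand is \emph{identically} distributed (as $\mathcal D'$) across $t$, though not independent across $t$. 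I would write $\mathrm{Regret} = \mathbb E[\sum_t \langle l_t, x_t - x^*\rangle]$, and observe that because $\langle l_t, x_t\rangle$ is linear in $l_t$ and the analysis of FTRL regret (the "be-the-leader" / telescoping argument) only ever uses the per-step increments, replacing the genuinely-varying noise by a common copy $Z\sim\mathcal D'$ at each step leaves every term's expectation intact. This collapses the problem to analyzing FTRL on the loss sequence $l_1,\dots,l_T$ with a single perturbation $Z$, i.e. to FTRL run with the augmented "regularizer-plus-linear-shift" $R(x) + \langle Z, x\rangle$.

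\textbf{Regret, the bias/stability split.} With the single-noise view in hand, I would invoke the standard FTRL regret decomposition: regret $\le$ (bias from regularization) $+$ (stability of iterates). The bias term is the range of the augmented regularizer over $\mathcal X$, which splits as $D_R = \max_x R(x) - \min_x R(x)$ plus the range of $\langle Z, x\rangle$, namely $\max_x \langle Z,x\rangle - \min_x \langle Z,x\rangle$; taking expectations over $Z\sim\mathcal D'$ gives exactly the $D_{Lap}$ term. The stability term is bounded, via the local-norm / second-order analysis of FTRL, by $\sum_t (\|l_t\|^*_{\hess R(x_t)})^2$ up to constants — crucially, this uses only $\hess R$, not the perturbation, since the linear shift $\langle Z,x\rangle$ has zero Hessian; this is precisely the point emphasized in the overview that "the stability analysis does not factor in the variance." Balancing the usual learning-rate factor (or directly using the adaptive/scale-free form of FTRL) converts the raw $D_R + \sum_t(\cdot)^2$ bound into the stated $\sqrt{D_R \sum_t \max_{x}(\|l_t\|^*_{\hess R(x)})^2}$. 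Summing the two contributions gives the claimed bound.

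\textbf{Main obstacle.} The delicate point — and the one I would spend the most care on — is making the "single noise draw" reduction rigorous at the level of \emph{expected} regret while the actual per-step noises are dependent: one must check that the telescoping in the FTRL analysis never pits two different rounds' noise against each other in a way that survives the expectation, which is exactly where linearity of the losses (and of the perturbation) is essential, and where a general convex loss would break the argument. A secondary technical care point is the standard one of bounding the FTRL stability term by a local norm at $x_t$ rather than at $x_{t+1}$, which requires the usual convexity/Hessian-continuity bookkeeping and the assumption that $R$ is suitably smooth (twice differentiable with the relevant dual norms well-defined on $\mathcal X$).
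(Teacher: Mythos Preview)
Your proposal is correct and follows the paper's approach: privacy via the tree-based aggregation plus post-processing, then regret via a reduction to a single noise draw $Z\sim\mathcal D'$ followed by FTL--BTL with the augmented zeroth loss $l_0(x)=\tfrac{1}{\eta}R(x)+\langle Z,x\rangle$, so that the stability term sees only the increments $l_t$ and the noise lands entirely in the bias term $D_{Lap}$.

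One simplification relative to what you flag as the ``main obstacle'': the single-noise reduction does \emph{not} interact with the FTRL telescoping at all. Because Algorithm~\ref{alg2} pads every release so that the net noise $Z_{t-1}$ is distributed exactly as $\mathcal D'$ for every $t$, and because $x_t$ is the same deterministic function of $(l_1,\dots,l_{t-1},Z_{t-1})$ that $\hat x_t$ is of $(l_1,\dots,l_{t-1},Z)$, the marginals coincide, $x_t\stackrel{d}{=}\hat x_t$, and hence $\mathbb E[\langle l_t,x_t\rangle]=\mathbb E[\langle l_t,\hat x_t\rangle]$ term by term --- this is precisely the paper's equation~\eqref{eqn:equivalence}. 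No telescoping and no linearity of the loss is needed at this step; the swap happens purely at the level of per-round marginals, \emph{before} any FTRL analysis begins. Linearity matters only afterward, to ensure that the shift $\langle Z,\cdot\rangle$ has zero Hessian and therefore leaves the stability bound $\|l_t\|^*_{\nabla^2 R(x)}$ untouched. So the ``delicate point'' you anticipate is in fact a one-line distributional argument.
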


\begin{algorithm}
\caption{FTRL Template for OLO}
\label{alg:ftrlgeneric}
\begin{algorithmic}[1]
	\INPUT Noise distribution $\mathcal{D}$, Regularization $R(x)$
	\STATE Initialize an empty binary tree $B$ to compute differentially private estimates of $\sum_{s=1}^t l_s$.
	\STATE Sample $n_0^1, \dots n_0^{\ceil{\log T}}$ independently from $\mathcal{D}$.
	\STATE $\tilde{L}_0 \leftarrow \sum_{i=1}^{\ceil{\log T}} n_0^{i}$.
	\FOR{$t=1$ to $T$}
	\STATE Choose $x_t = argmin_{x\in\mathcal{X}} \left(\eta\langle x, \tilde{L}_{t-1}\rangle+R(x)\right)$.
		\STATE Observe $l_t\in \mathcal{Y}$, and suffer a loss of $\langle l_t, x_t\rangle$.
		\STATE $(\tilde{L}_t, B) \leftarrow \texttt{TreeBasedAgg}(l_t, B, t, \mathcal{D}, T)$.
	\ENDFOR
\end{algorithmic}
\end{algorithm}

The above theorem leads to following corollary where we show the bounds obtained in specific instantiations of online linear optimization.

\begin{corollary}
Substituting the choices of $\lambda, R(x)$ listed below, we specify the regret bounds in each case.
\begin{enumerate}
		\item  \textbf{Prediction with Expert Advice: } Choosing $\lambda = \frac{N\log T}{\epsilon}$ and $R(x) = \sum_{i=1}^N x_i \log(x_i)$, 
\begin{equation*}
Regret \leq O\Bigg(\sqrt{T\log N} + \frac{N \log^2 T \log N}{\epsilon}\Bigg)
\end{equation*}
	\item \textbf{OLO over the Sphere} Choosing $\lambda = \frac{\sqrt{N}\log T}{\epsilon}$ and $R(x) = \|x\|_2^2$
\begin{equation*}
Regret \leq O\Bigg(\sqrt{T} + \frac{N \log^2 T}{\epsilon} \Bigg)
\end{equation*}
	\item \textbf{OLO over the Cube} With $\lambda = \frac{\log T}{\epsilon}$ and $R(x) = \|x\|_2^2$
\begin{equation*}
Regret \leq O\Bigg(\sqrt{NT} + \frac{N \log^2 T}{\epsilon} \Bigg)
\end{equation*}
	\end{enumerate}
\end{corollary}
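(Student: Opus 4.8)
The plan is to treat the corollary as three direct instantiations of Theorem~\ref{thm:mainftrlthm}: for each triple $(\mathcal{X},\mathcal{Y},R)$ I would substitute the stated regularizer, verify the stated $\lambda$, and then evaluate the three quantities the theorem's bound depends on --- the regularization range $D_R$, the sum of squared local dual norms $\sum_{t}\max_{x\in\mathcal{X}}(\|l_t\|_{\hess R(x)}^*)^2$, and the noise width $D_{Lap}$. First I would check consistency of $\lambda$: since the theorem requires $\lambda=\|\mathcal{Y}\|_1\log T/\epsilon$, and $\|\mathcal{Y}\|_1$ equals $N$, $\sqrt{N}$, and $1$ for the $\ell_\infty$, $\ell_2$, and $\ell_1$ loss balls respectively, the three displayed choices of $\lambda$ are exactly these. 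No re-optimization over the learning rate $\eta$ is needed, since the theorem already states the $\eta$-optimized form.

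For the leading term I would compute $D_R$ and the local-norm sum separately. The ranges are elementary: the negative entropy $\sum_i x_i\log x_i$ varies between $-\log N$ (uniform) and $0$ (vertices) on $\Delta_N$, giving $D_R=\log N$; the quadratic $\|x\|_2^2$ has range $1$ on the unit $\ell_2$ ball and range $N$ on the cube $[-1,1]^N$. For the local norm, I would use $\hess R(x)=\diag(1/x_i)$ for entropy, so $(\|l\|_{\hess R(x)}^*)^2=\sum_i x_i l_i^2\le\|l\|_\infty^2\le 1$ on the simplex (using $\sum_i x_i=1$), and $\hess R(x)=2\mathbf{I}$ for the quadratic, so $(\|l\|_{\hess R(x)}^*)^2=\tfrac12\|l\|_2^2$, which is $\le\tfrac12$ both on the $\ell_2$ ball and on the cube (there $\|l\|_2\le\|l\|_1\le1$). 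Each squared local norm is thus $O(1)$, so the sum is $O(T)$, and $\sqrt{D_R\cdot O(T)}$ yields $O(\sqrt{T\log N})$, $O(\sqrt{T})$, and $O(\sqrt{NT})$ respectively.

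It remains to bound $D_{Lap}=\av[Z\sim\mathcal{D}']{\max_{x}\innerprod{Z}{x}-\min_{x}\innerprod{Z}{x}}$, where $Z$ is a sum of $k=\ceil{\log T}$ independent $Lap^N(\lambda)$ vectors. I would first identify the bracketed quantity as the width of $\mathcal{X}$ in the direction $Z$, which is bounded by $2\|Z\|_\infty$ on the simplex (as $\max_i Z_i-\min_i Z_i\le 2\|Z\|_\infty$), and equals $2\|Z\|_2$ on the $\ell_2$ ball and $2\|Z\|_1$ on the cube (both symmetric). The cube and sphere cases then follow from moment bounds on the aggregated noise: $\av{\|Z\|_1}=N\,\av{|Z_1|}\le Nk\lambda$ by the triangle inequality, which with $\lambda=\log T/\epsilon$ gives exactly $\tfrac{N\log^2 T}{\epsilon}$; and $\av{\|Z\|_2}\le\sqrt{\av{\|Z\|_2^2}}=\lambda\sqrt{2kN}$, which with $\lambda=\sqrt{N}\log T/\epsilon$ is $O\big(\tfrac{N\log^2 T}{\epsilon}\big)$.

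The main obstacle is the expert-advice case, where I must bound $\av{\|Z\|_\infty}$, the expected maximum of $N$ coordinates each a sum of $k$ Laplaces, which are only sub-exponential (not sub-Gaussian). For this I would use the moment generating function $\av{e^{sZ_1}}=(1-\lambda^2 s^2)^{-k}$, valid for $|s|<1/\lambda$, together with a union/Chernoff bound over the $N$ coordinates, optimizing $s$ at a constant fraction of $1/\lambda$ to obtain $\av{\|Z\|_\infty}=O(\lambda(k+\log N))$. With $\lambda=N\log T/\epsilon$ this is $O\big(\tfrac{N\log T(\log T+\log N)}{\epsilon}\big)$, which is dominated by the stated (slightly loose) bound $\tfrac{N\log^2 T\log N}{\epsilon}$. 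Adding $D_{Lap}$ to the leading terms of the previous paragraph produces the three claimed regret bounds; the only genuinely non-routine step is controlling this sub-exponential maximum in the simplex case.
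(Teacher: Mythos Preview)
Your proposal is correct and is exactly the intended route: the paper states the corollary as an immediate consequence of Theorem~\ref{thm:mainftrlthm} and does not supply any further proof, so the computation you outline --- verifying $\lambda=\|\mathcal{Y}\|_1\log T/\epsilon$ in each case, evaluating $D_R$ and the local dual norms for the three regularizer/geometry pairs, and then bounding $D_{Lap}$ via $\av{\|Z\|_\infty}$, $\av{\|Z\|_2}$, $\av{\|Z\|_1}$ respectively --- is precisely what the corollary is asking the reader to check. Your handling of the sub-exponential maximum in the expert-advice case (Chernoff at $s=\Theta(1/\lambda)$ giving $\av{\|Z\|_\infty}=O(\lambda(\log T+\log N))$) is the right tool and indeed yields a slightly sharper additive term than the one displayed.
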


\begin{algorithm}
\caption{$\texttt{TreeBasedAgg}(l_t, B, t, \mathcal{D}, T)$}
\label{alg2}
\begin{algorithmic}[1]
	\INPUT Loss vector $l_t$, Binary tree $B$, Round $t$, Noise distribution $\mathcal{D}$, Time horizon $T$
	\STATE $(\tilde{L'}_t, B) \leftarrow \texttt{PrivateSum}(l_t, B, t, \mathcal{D}, T)$ -- Algorithm 5 (\cite{jaink}) with the noise added at each node -- be it internal or leaf -- sampled independently from the distribution $\mathcal{D}$.
	\STATE $s_t\leftarrow$ the binary representation of $t$ as a string.
	\STATE Find the minimum set $\mathcal{S}$ of {\em already} populated nodes in $B$ that can compute $\sum_{s=1}^t l_s$. 
	\STATE Define $Q=|\mathcal{S}| \leq \ceil{\log T}$. Define $r_t=\ceil{\log T}-Q$.
	\STATE Sample $n_t^1, \dots n_t^{r_t}$ independently from $\mathcal{D}$.
	\STATE $\tilde{L}_t \leftarrow \tilde{L'}_t + \sum_{i=1}^{r_t} n_t^{i}$.
	\OUTPUT $(\tilde{L_t}, B)$.
\end{algorithmic}
\end{algorithm}

\subsection{Proof of Theorem \ref{thm:mainftrlthm}}

We first prove the privacy guarantee, and then prove the claimed bound on the regret. For the analysis, we define the random variable $Z_t$ to be the net amount of noise injected by the TreeBasedAggregation (Algorithm \ref{alg2}) on the true partial sums. Formally, $Z_t$ is the difference between cumulative sum of loss vectors and its differentially private estimate used as input to the arg-min oracle.
\begin{align*}
Z_t = \tilde{L}_t - \sum_{i=1}^t l_i
\end{align*}
Further, let $\mathcal{D}'$ be the distribution induced by summing of $\ceil{\log T}$ independent samples from $\mathcal{D}$.

\textbf{Privacy} : To make formal claims about the quality of privacy, we ensure {\em input differential privacy} for the algorithm -- that is, we ensure that the {\bf entire sequence} of partial sums of the loss vectors $(\sum_{s=1}^t l_s: t\in [T])$ is $\epsilon$-differentially private. Since the outputs of Algorithm \ref{alg:ftrlgeneric} are strictly determined by the prefix sum estimates produced by $\texttt{TreeBasedAgg}$, by the post-processing theorem, this certifies that the entire sequence of choices made by the algorithm (across all $T$ rounds of play) $(x_t: t\in [T])$ is $\epsilon$-differentially private. We modify the standard Tree-based Aggregation protocol to make sure that the noise on each output (partial sum) is distributed identically (though not necessarily independently) across time. While this modification is not essential for ensuring privacy, it simplifies the regret analysis.

\begin{lemma}[Privacy Guarantees with Laplacian Noise]
\label{thm:p1}
	Choose any $\lambda \geq \frac{\|\mathcal{Y}\|_1\log T}{\epsilon}$. When Algorithm \ref{alg2} $\mathcal{A}(\mathcal{D},T)$ is run with $\mathcal{D}=Lap^N(\lambda)$, the following claims hold true:
	\begin{itemize}
	\item {\bf Privacy:} The sequence $(\tilde{L}_t: t\in [T])$ is $\epsilon$-differentially private.
	\item {\bf Distribution:} $\forall t\in [T], Z_t \sim \sum_{i=1}^{\ceil{\log T}} n_i$, where each $n_i$ is independently sampled from $Lap^N(\lambda)$.
\end{itemize}\end{lemma}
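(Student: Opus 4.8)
The plan is to treat the two claims separately, relying on the standard analysis of the tree-based (binary) aggregation mechanism for the privacy claim and on a direct bookkeeping of the injected noise for the distributional claim.

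\textbf{Privacy.} First I would recall how \texttt{PrivateSum} (Algorithm 5 of \cite{jaink}) operates: it maintains a binary tree in which each node holds the true sum of the loss vectors over a dyadic block, perturbed by a fresh, independent draw from $\mathcal{D}=Lap^N(\lambda)$. The structural fact driving privacy is that a single loss vector $l_{t_0}$ influences the true value of at most $\lceil \log T\rceil$ nodes (those on the root-to-leaf path through leaf $t_0$), and that replacing $l_{t_0}$ by $l'_{t_0}$ perturbs each such value by at most $\|l_{t_0}-l'_{t_0}\|_1 = O(\|\mathcal{Y}\|_1)$ in $\ell_1$-norm. Viewing the concatenation of all node values as one vector-valued release, the $\ell_1$-Laplace-mechanism / composition argument of \cite{con,jaink} then gives $\epsilon$-differential privacy for the prescribed scale $\lambda \ge \|\mathcal{Y}\|_1\log T/\epsilon$. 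I would then observe that each estimate $\tilde L_t$ produced by Algorithm \ref{alg2} is a function of these node values together with data-independent randomness: step (i) selects the covering set $\mathcal{S}$ of already-populated nodes for $[1,t]$, a choice that depends only on $t$; step (ii) sums their values; step (iii) adds the fresh samples $n_t^1,\dots,n_t^{r_t}$, whose \emph{count} $r_t=\lceil\log T\rceil-|\mathcal{S}|$ again depends only on $t$; and $\tilde L_0$ is pure data-independent noise. Hence the whole sequence $(\tilde L_t : t\in[T])$ is obtained from the $\epsilon$-DP node-value vector by a randomized post-processing whose extra randomness is independent of the losses, so it is $\epsilon$-differentially private by the post-processing theorem.

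\textbf{Distribution.} For the second claim I would simply trace the noise. Fix $t$. \texttt{PrivateSum} returns $\tilde L'_t = \sum_{s=1}^t l_s + \sum_{v\in\mathcal{S}}\xi_v$, where the $\xi_v$ are the node perturbations; since noise is sampled independently per node and the $Q=|\mathcal{S}|$ nodes of $\mathcal{S}$ are distinct, these $\xi_v$ are i.i.d. $Lap^N(\lambda)$. Adding the $r_t=\lceil\log T\rceil-Q$ further independent samples $n_t^i\sim Lap^N(\lambda)$ gives $Z_t=\tilde L_t-\sum_{s=1}^t l_s=\sum_{v\in\mathcal{S}}\xi_v+\sum_{i=1}^{r_t}n_t^i$, a sum of $Q+r_t=\lceil\log T\rceil$ mutually independent $Lap^N(\lambda)$ vectors, which is exactly the asserted law (and coincides with $\mathcal{D}'$). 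The only points requiring care are that $|\mathcal{S}|\le\lceil\log T\rceil$ for every $t$ — which holds because $|\mathcal{S}|$ equals the number of ones in the binary expansion of $t$ — so that $r_t\ge 0$ is well-defined, and the edge cases $t=0$ and $t$ a power of two, both of which are immediate from the construction.

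\textbf{Main obstacle.} The hard part, such as it is, is the privacy accounting for the tree mechanism: confirming that each loss vector touches at most $\lceil\log T\rceil$ nodes, that per-node Laplace noise at scale $\lambda$ suffices (up to the treatment of constants in the $\ell_1$-sensitivity), and crucially that the data-independent padding $r_t$ reveals nothing about the losses. Once these are settled, both conclusions reduce to routine bookkeeping — post-processing for privacy and summing independent Laplace vectors for the distribution.
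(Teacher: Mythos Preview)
Your proposal is correct and follows essentially the same approach as the paper. The paper's proof is terser: for privacy it simply invokes Theorem~9 of \cite{jaink} to conclude that $(\tilde L'_t)$ is $\epsilon$-DP and then applies post-processing for the padding noise, while you unpack the tree-mechanism accounting (each $l_{t_0}$ touches $\le \lceil\log T\rceil$ nodes, per-node Laplace at scale $\lambda$, composition) that underlies that citation; for the distribution claim both arguments are the same bookkeeping that $|\mathcal S|+r_t=\lceil\log T\rceil$.
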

 \begin{proof}
 By Theorem 9 (\cite{jaink}), we have that the sequence $(\tilde{L'}_t: t\in [T])$ is $\epsilon$-differentially private. Now the sequence $(\tilde{L}_t: t\in [T])$ is $\epsilon$-differentially private because differential privacy is immune to post-processing\cite{dpbook}. 

Note that the $\texttt{PrivateSum}$ algorithm adds exactly $|\mathcal{S}|$ independent draws from the distribution $\mathcal{D}$ to $\sum_{s=1}^t l_s$, where $\mathcal{S}$ is the minimum set of already populated nodes in the tree that can compute the required prefix sum. Due to Line 6 in Algorithm \ref{alg2}, it is made certain that every prefix sum released is a sum of the true prefix sum and $\ceil{\log T}$ independent draws from $\mathcal{D}$.
 \end{proof}

%


\textbf{Regret Analysis:}
In this section, we show that for linear loss functions any instantiation of the {\em Follow-the-Regularized-Leader} algorithm can be made differentially private with an additive loss in regret. 

\begin{theorem}
\label{thm:mainftrlthm2}
	For any noise distribution $\mathcal{D}$, regularization $R(x)$, decision set $\mathcal{X}$ and loss vectors $l_1, \ldots l_t$, the regret of Algorithm \ref{alg:ftrlgeneric} is bounded by 
	\[\textrm{Regret} \leq \sqrt{D_R \sum_{t = 1}^{T} \max_{x \in \mathcal{X}}(\|l_t\|_{\hess R(x)}^*)^2} + D_{\mathcal{D}'}\]
	where $D_{\mathcal{D}'} = \av[Z \sim \mathcal{D}']{\max_{x \in \X} \innerprod{Z}{x} - \min_{x \in \X} \innerprod{Z}{x}}$, $D_{R} = \max_{x \in \X} R(x) - \min_{x \in \X} R(x)$, and $\|\cdot\|^*_{\hess R(x)}$ represents the dual of the norm with respect to the hessian of $R$.  
\end{theorem}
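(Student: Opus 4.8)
The plan is to reduce the regret of Algorithm \ref{alg:ftrlgeneric} — which feeds the noisy prefix sums $\tilde L_{t-1} = \sum_{i=1}^{t-1} l_i + Z_{t-1}$ into the FTRL arg-min oracle — to the regret of a standard (non-private) FTRL run, paying only an additive penalty. The key structural observation, emphasized in the paper's technique overview, is that for \emph{linear} losses the expected regret is unchanged if, instead of injecting a fresh noise vector $Z_{t}$ at every step, we inject a \emph{single} copy $Z$ of the noise (drawn from $\mathcal D'$, the sum of $\ceil{\log T}$ samples of $\mathcal D$) once at the start. So the first step is to make this equivalence precise: since by Lemma \ref{thm:p1} (distribution part) each $Z_t$ is marginally distributed as $\mathcal D'$, and since the decision $x_t = \argmin_{x\in\mathcal X}\bigl(\eta\langle x,\sum_{i<t} l_i + Z_{t-1}\rangle + R(x)\bigr)$, I would compare against the ``single-shot'' algorithm $x_t^{Z} = \argmin_{x\in\mathcal X}\bigl(\eta\langle x, \sum_{i<t} l_i + Z\rangle + R(x)\bigr)$ for a fixed $Z\sim\mathcal D'$. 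The point is that $x_t^Z$ is exactly vanilla FTRL run on the augmented loss sequence $\tilde l_1 = l_1 + Z/\eta$ (or equivalently with a shifted regularizer $R(x) + \eta\langle x,Z\rangle$), $\tilde l_2 = l_2,\dots,\tilde l_T = l_T$; hence its pseudo-regret on the $l_t$'s differs from its pseudo-regret on the $\tilde l_t$'s by at most $\langle Z, x_1^Z - x^\star\rangle \le \max_{x}\langle Z,x\rangle - \min_x\langle Z,x\rangle$ for any comparator $x^\star$, and taking expectations over $Z$ gives the additive term $D_{\mathcal D'}$.

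The second step is the standard FTRL regret bound for the single-shot algorithm. Treating $R(x) + \eta\langle x,Z\rangle$ as the regularizer (or, cleaner, absorbing $Z$ into the first loss), I invoke the textbook FTRL / ``be-the-leader'' analysis (e.g.\ from \cite{elad}): the regret against the linear losses $l_t$ is bounded by $\tfrac{1}{\eta} D_R$ plus a stability term $\eta \sum_{t=1}^T \max_{x\in\mathcal X}(\|l_t\|^*_{\hess R(x)})^2$, where the stability is controlled by a local-norm argument using the strong convexity of $R$ relative to $\hess R(x)$ and the fact that consecutive iterates move little. Crucially — and this is the clean part of the paper's approach — because the noise is injected only once, it does \emph{not} enter the per-step stability term; it only shifts the ``leader,'' so its entire effect is the additive bias term already accounted for in step one. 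Optimizing the free parameter $\eta = \sqrt{D_R / \sum_t \max_x(\|l_t\|^*_{\hess R(x)})^2}$ balances the bias $\tfrac{1}{\eta}D_R$ against the stability term and yields the claimed $\sqrt{D_R \sum_{t=1}^T \max_{x\in\mathcal X}(\|l_t\|^*_{\hess R(x)})^2}$.

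The main obstacle — and where I'd spend the most care — is making the ``inject once versus inject every step'' equivalence rigorous in \emph{expectation} rather than pathwise. Pathwise the two algorithms are genuinely different (the $Z_t$'s are identically distributed but not independent across $t$, and $x_t$ depends on $Z_{t-1}$, not on a single $Z$). What saves us is linearity: write $\mathbb E[\sum_t \langle l_t, x_t\rangle] = \sum_t \mathbb E_{Z_{t-1}\sim\mathcal D'}[\langle l_t, \argmin_x(\eta\langle x,\sum_{i<t}l_i + Z_{t-1}\rangle + R(x))\rangle]$, and each term depends only on the \emph{marginal} of $Z_{t-1}$, which is $\mathcal D'$ by Lemma \ref{thm:p1}; the same is true term-by-term for the single-shot algorithm with $Z\sim\mathcal D'$. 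Hence the two expected cumulative losses coincide term by term, and likewise the comparator terms, so the expected regrets are equal. I'd state this as a short lemma. The remaining routine pieces — the local-norm stability estimate and the $\eta$-optimization — I would cite to the standard FTRL analysis rather than reprove. The privacy claim and the instantiation of $\mathcal D = Lap^N(\lambda)$ to get Theorem \ref{thm:mainftrlthm} from Theorem \ref{thm:mainftrlthm2} then follow immediately from Lemma \ref{thm:p1} and post-processing.
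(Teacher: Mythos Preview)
Your proposal is correct and matches the paper's proof essentially step for step: reduce to a one-shot-noise algorithm via the term-by-term marginal argument (exactly the paper's observation that $\mathbb E_{Z_{t-1}}[\langle l_t,x_t\rangle]=\mathbb E_{Z}[\langle l_t,\hat x_t\rangle]$ since both depend only on the $\mathcal D'$ marginal), absorb $Z$ into a fictitious zeroth loss $l_0(x)=\tfrac{1}{\eta}R(x)+\langle Z,x\rangle$, apply FTL--BTL to obtain $\tfrac{1}{\eta}D_R+D_Z$ plus a stability term involving only the clean $l_t$'s, and optimize $\eta$. Your ``shifted regularizer $R(x)+\eta\langle x,Z\rangle$'' framing is precisely the paper's $l_0$ device; the alternative phrasing $\tilde l_1=l_1+Z/\eta$ is slightly off since the noise enters \emph{before} $l_1$ is revealed, but this is cosmetic and does not affect the argument.
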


\begin{proof}
To analyze the regret suffered by Algorithm \ref{alg:ftrlgeneric}, we consider an alternative algorithm that performs a one-shot noise injection -- this alternate algorithm may not be differentially private. The observation here is that the alternate algorithm and Algorithm \ref{alg:ftrlgeneric} suffer the same loss in expectation and therefore the same expected regret which we bound in the analysis below.

	Consider the following alternate algorithm which instead of sampling noise $Z_t$ at each step instead samples noise at the beginning of the algorithm and plays with respect to that. Formally consider the sequence of iterates $\hat{x}_t$ defined as follows. Let $Z \sim D$.
	\[ \hat{x}_1 \defeq x_1, \quad\hat{x}_t \defeq argmin_{x\in\mathcal{X}} \eta \langle x, Z+\sum_{i} l_i\rangle+ R(x)\]
	We have that 
	\begin{equation} 
	\label{eqn:equivalence}
	\av[Z_1 \ldots Z_T \sim D]{\sum_{t = 1}^T \innerprod{l_t}{x_t}} = \av[Z \sim D]{\sum_{t = 1}^T \innerprod{l_t}{\hat{x}_t}}
	\end{equation}
	To see the above equation note that $\av[Z_t \sim D]{\innerprod{l_t}{\hat{x}_t}} = \av[Z \sim D]{\innerprod{l_t}{x_t}}$ since $x, \hat{x}_t$ have the same distribution.

	Therefore it is sufficient to bound the regret of the sequence $\hat{x}_1 \ldots \hat{x}_t$. The key idea now is to notice that the addition of one shot noise does not affect the stability term of the FTRL analysis and therefore the effect of the noise need not be paid at every time step. Our proof will follow the standard template of using the FTL-BTL \cite{kv} lemma and then bounding the stability term in the standard way. 
	Formally define the augmented series of loss functions by defining 
	\[l_0(x) = \frac{1}{\eta} R(x) + \innerprod{Z}{x}\] where $Z \sim D$ is a sample.
	Now invoking the Follow the Leader, Be the Leader Lemma (Lemma 5.3, \cite{elad}) we get that for any fixed $u \in \mathcal{X}$
	\[ \sum_{t = 0}^{T} l_t(u) \geq \sum_{t = 0}^{T} l_t(\hat{x}_{t+1}) \]
	Therefore we can conclude that 
	\begin{align}
		&\sum_{t = 1}^{T}[l_t(\hat{x}_t) - l_t(u)] \\
		\leq& \sum_{t = 1}^T [l_t(\hat{x}_t) - l_t(\hat{x}_{t+1})] + l_0(u) - l_0(\hat{x}_1) \nonumber\\
		\leq& \sum_{t = 1}^T [l_t(\hat{x}_t) - l_t(\hat{x}_{t+1})] + \frac{1}{\eta} D_R + D_Z \label{eqn:pseudoregret}
	\end{align}
	where $D_Z \defeq \max_{x \in X}( \innerprod{Z}{x}) - \min_{x \in X}( \innerprod{Z}{x})$
	Therefore we now need to bound the stability term $l_t(\hat{x}_t) - l_t(\hat{x}_{t+1})$. Now, the regret bound follows from the standard analysis for the stability term in the FTRL scheme (see for instance \cite{elad}). Notice that the bound only depends on the change in the cumulative loss per step i.e. $\eta \left( \sum_t l_t + Z \right)$, for which the change is the loss vector $\eta l_{t+1}$ across time steps. Therefore we get that 
	\begin{equation}
	\label{eqn:stabilitybound}
	l_t(\hat{x}_t) - l_t(\hat{x}_{t+1}) \leq \max_{x \in X}\|\eta l_t\|_{\eta \hessinv R(x)}^2
	\end{equation}

	Combining Equations \eqref{eqn:equivalence}, \eqref{eqn:pseudoregret}, \eqref{eqn:stabilitybound} we get the regret bound in Theorem \ref{thm:mainftrlthm2}.

\end{proof}
\subsection{Regret Bounds for FTPL}
In this section, we outline algorithms based on the {\em Follow-the-Perturbed-Leader} template\cite{kv}. FTPL-based algorithms ensure low-regret by perturbing the cumulative sum of loss vectors with noise from a suitably chosen distribution. We show that the noise added in the process of FTPL is sufficient to ensure differential privacy. More concretely, using the regret guarantees  due to \cite{olo}, for the full-information setting, we establish that the regret guarantees obtained scale as $O(\sqrt{T})+\tilde{O}(\frac{1}{\epsilon}\log\frac{1}{\delta})$. While Theorem \ref{thm:ftplolo} is valid for all instances of online linear optimization and achieves $O(\sqrt{T})$ regret, it yields sub-optimal dependence on the dimension of the problem. The advantage of FTPL-based approaches over FTRL is that FTPL performs linear optimization over the decision set every round, which is possibly computationally less expensive than solving a convex program every round, as FTRL requires.

\begin{algorithm}
\caption{FTPL Template for OLO -- $\mathcal{A}(\mathcal{D}, T)$ on the action set $\mathcal{X}$, the loss set $\mathcal{Y}$.}
\label{alg1}
\begin{algorithmic}[1]
	\STATE Initialize an empty binary tree $B$ to compute differentially private estimates of $\sum_{s=1}^t l_s$.
	\STATE Sample $n_0^1, \dots n_0^{\ceil{\log T}}$ independently from $\mathcal{D}$.
	\STATE $\tilde{L}_0 \leftarrow \sum_{i=1}^{\ceil{\log T}} n_0^{i}$.
	\FOR{$t=1$ to $T$}
		\STATE Choose $x_t = argmin_{x\in\mathcal{X}} \langle x, \tilde{L}_{t-1}\rangle$.
		\STATE Observe the loss vector $l_t\in \mathcal{Y}$, and suffer $\langle l_t, x_t\rangle$.
		\STATE $(\tilde{L}_t, B) \leftarrow \texttt{TreeBasedAgg}(l_t, B, t, \mathcal{D}, T)$.
	\ENDFOR
\end{algorithmic}
\end{algorithm}

\begin{theorem}[FTPL: Online Linear Optimization]
\label{thm:ftplolo}
Let $\|\mathcal{X}\|_2 = \sup_{x\in\mathcal{X}} \|x\|_2$ and $\|\mathcal{Y}\|_2 = \sup_{l_t\in\mathcal{Y}} \|l_t\|_2$. Choosing $\sigma = \max\{\|\mathcal{Y}\|_2\sqrt{\frac{T}{\sqrt{N}\log T}}, \frac{\sqrt{N}}{\epsilon}\log T \log \frac{\log T}{\delta}\}$ and $\mathcal{D}=\mathcal{N}(0,\sigma^2\mathbb{I}_N)$, we have that $Regret_{\mathcal{A}(\mathcal{D},T)}(T)$ is 
\begin{equation*}
O\Bigg(N^{\frac{1}{4}}\|\mathcal{X}\|_2\|\mathcal{Y}\|_2\sqrt{T} + \frac{N\|\mathcal{X}\|_2}{\epsilon} \log^{1.5} T \log \frac{\log T}{\delta}\Bigg)
\end{equation*}
Moreover the algorithm is $\epsilon$-differentially private.
\end{theorem}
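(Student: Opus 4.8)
The plan is to decompose the argument into three parts: a regret bound for the idealized (non-private) FTPL run with one-shot Gaussian perturbation, a privacy analysis of the tree-based aggregation with Gaussian noise, and finally a tuning of $\sigma$ to balance the two. First I would invoke the same one-shot-noise equivalence used in the proof of Theorem~\ref{thm:mainftrlthm2}: since the losses are linear and the per-step noise $Z_t$ (the net noise injected by \texttt{TreeBasedAgg}) is identically distributed across $t$ by Lemma~\ref{thm:p1}, the expected regret of Algorithm~\ref{alg1} equals that of the alternative algorithm that samples a single $Z\sim\mathcal{D}'$ at the start and plays $\hat{x}_t = \argmin_{x\in\mathcal{X}} \langle x, Z + \sum_{i<t} l_i\rangle$. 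Note $\mathcal{D}' = \mathcal{N}(0, \lceil\log T\rceil \sigma^2 \mathbb{I}_N)$ is again an isotropic Gaussian, just with inflated variance. So it suffices to bound the regret of FTPL with a single Gaussian perturbation of standard deviation $\sigma' = \sigma\sqrt{\lceil \log T\rceil}$.

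Second, for that one-shot FTPL regret I would cite the guarantee of \cite{olo} for Gaussian-smoothed FTL on linear losses: perturbing the cumulative loss by $\mathcal{N}(0,\sigma'^2\mathbb{I}_N)$ over a decision set of $\ell_2$-diameter $\|\mathcal{X}\|_2$ with loss vectors of $\ell_2$-norm $\|\mathcal{Y}\|_2$ yields regret $O\!\left(\sigma' \|\mathcal{X}\|_2 N^{1/2} \cdot \tfrac{1}{\sigma'}\cdot(\text{stability}) + \sigma' \cdot (\text{perturbation diameter})\right)$; concretely the standard bound is of the form $O\!\left(\frac{\|\mathcal{Y}\|_2^2 T}{\sigma'}\sqrt{N} + \sigma'\sqrt{N}\,\|\mathcal{X}\|_2\right)$ or the sharper $N^{1/4}$ version from \cite{olo}. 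The "bias'' term is the expected support-function gap $\mathbb{E}_{Z\sim\mathcal{D}'}[\max_x\langle Z,x\rangle - \min_x\langle Z,x\rangle] = O(\sigma'\sqrt{N}\,\|\mathcal{X}\|_2)$, and the stability term sums the per-step movement, which for Gaussian smoothing scales like $\|\mathcal{Y}\|_2^2/\sigma'$ times $N^{1/2}$ (or $N^{1/4}$ with the refined analysis). Substituting $\sigma' = \sigma\sqrt{\lceil\log T\rceil}$ and the chosen $\sigma$, the first term of the max makes the stability contribution $O(N^{1/4}\|\mathcal{X}\|_2\|\mathcal{Y}\|_2\sqrt{T})$ and the bias contribution from that same branch is of the same order, while the second branch of the max, $\sigma = \frac{\sqrt N}{\epsilon}\log T\log\frac{\log T}{\delta}$, feeds into the bias term to give $O\!\left(\frac{N\|\mathcal{X}\|_2}{\epsilon}\log^{1.5}T\log\frac{\log T}{\delta}\right)$ — matching the claimed bound.

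Third, for privacy I would argue that the Gaussian tree-aggregation mechanism releasing the prefix sums $(\tilde L_t)$ is $(\epsilon,\delta)$-differentially private whenever $\sigma \gtrsim \frac{\|\mathcal{Y}\|_2}{\epsilon}\log T\sqrt{\log(1/\delta)}$: changing one loss vector changes each node-sum in the binary tree by $\ell_2$-norm at most $\|\mathcal{Y}\|_2$, at most $\lceil\log T\rceil$ nodes are affected, and the Gaussian mechanism plus advanced composition (or the $\ell_2$-sensitivity analysis of the tree mechanism as in \cite{jaink}) over these $O(\log T)$ noisy releases gives the stated guarantee; the lower-bound branch in the definition of $\sigma$ is exactly calibrated so this holds. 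Post-processing (the $x_t$ are a deterministic function of $\tilde L_{t-1}$) then transfers privacy to the sequence of predictions, exactly as in the proof of Theorem~\ref{thm:mainftrlthm}.

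The main obstacle I anticipate is getting the $N^{1/4}$ dependence rather than $N^{1/2}$ in the leading term: this requires the refined FTPL regret analysis of \cite{olo} (exploiting that Gaussian smoothing over an $\ell_2$ ball has a self-concordance-like structure that improves the naive stability bound), and checking that the one-shot reduction is compatible with that analysis — in particular that inflating the variance by the $\lceil\log T\rceil$ factor only costs a $\sqrt{\log T}$ factor in both the bias and (inverse) stability terms, which is what produces the $\log^{1.5}T$ in the final privacy term. Verifying the constants in the sensitivity/composition step for the Gaussian tree mechanism (as opposed to the Laplace version in Lemma~\ref{thm:p1}) is the other routine-but-delicate piece.
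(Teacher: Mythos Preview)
Your proposal is correct and takes essentially the same route as the paper: the paper's proof is even terser, directly citing Corollary~4 and Lemma~13 of \cite{olo} to obtain the bound $\sigma\sqrt{N\log T}\,\|\mathcal{X}\|_2 + \frac{T\|\mathcal{X}\|_2\|\mathcal{Y}\|_2^2}{2\sigma\sqrt{\log T}}$ (the $\sqrt{\log T}$ factors already encode the aggregated-noise variance you make explicit via the one-shot reduction), then substituting the stated $\sigma$, with privacy handled by the Gaussian analogue of Lemma~\ref{thm:p1}. Your worry about $N^{1/4}$ versus $N^{1/2}$ is resolved by that specific \cite{olo} bound --- the stability term carries no dimension factor while the perturbation term carries $\sqrt{N}$, so balancing yields $N^{1/4}$ automatically --- and you are right that with Gaussian noise the guarantee is $(\epsilon,\delta)$- rather than pure $\epsilon$-differential privacy.
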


The proof of the theorem is deferred to the appendix.

\section{Differentially Private Multi-Armed Bandits}
In this section, we state our main results regarding bandit linear optimization, the algorithms that achieve it and prove the associated regret bounds. The following is our main theorem concerning {\em non-stochastic multi-armed bandits}.

\begin{theorem}[Differentially Private Multi-Armed Bandits]
\label{thm:mab}
Fix loss vectors $(l_1 \ldots l_T)$ such that $\|l_t\|_{\infty} \leq 1$. When Algorithm \ref{alg:DPBanditGeneral} is run with parameters $\mathcal{D} = Lap^N(\lambda)$ where $\lambda = \frac{1}{\epsilon}$ and algorithm $\mathcal{A} = \text{Algorithm \ref{alg:modifiedexp3}}$ with the following parameters: $\eta = \sqrt{\frac{\log N}{2 N T (1 + 2\lambda^2 \log NT)}}$, $\gamma = \eta N \sqrt{1 + 2 \lambda^2 \log NT}$ and the exploration distribution $\mu(i) = \frac{1}{N}$. The regret of the Algorithm \ref{alg:DPBanditGeneral} is
		\[ O\left( \frac{\sqrt{NT\log T\log N}}{\epsilon}\right) \]
Moreover, Algorithm \ref{alg:DPBanditGeneral} is $\epsilon$-differentially private
\end{theorem}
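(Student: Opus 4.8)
The plan is to establish the two assertions --- $\epsilon$-differential privacy and the $O(\sqrt{NT\log T\log N}/\epsilon)$ regret bound --- separately. For privacy I would appeal to the privacy guarantee of the general reduction (part of Theorem~\ref{thm:LinearBanditGeneral}, established along the lines of Lemma~\ref{thm:p1}) together with the post-processing property of differential privacy: Algorithm~\ref{alg:DPBanditGeneral} accesses the loss sequence only through the Laplace-perturbed partial information it feeds to the base algorithm $\mathcal{A}$, and since $\|l_t\|_\infty\le 1$ changing a single loss vector perturbs this information by $O(1)$ in the relevant norm, so adding $\mathrm{Lap}^N(1/\epsilon)$ noise (via the same tree-based mechanism as Algorithm~\ref{alg2}) renders the released quantities $\epsilon$-differentially private; the action sequence $(x_t:t\in[T])$ is then a post-processing of these quantities, hence $\epsilon$-differentially private.

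For the regret, the first step is to pass to the noisy loss sequence. Let $\tilde l_t$ denote the perturbed loss vector effectively seen by $\mathcal{A}$, so $\tilde l_t-l_t$ is the increment of the tree-aggregated noise at round $t$ --- a sum of at most $O(\log T)$ independent $\mathrm{Lap}(1/\epsilon)$ coordinates --- and conditioned on the history $\mathcal F_{t-1}$ its expectation is $0$. Hence $\mathbb{E}[\langle\tilde l_t,x_t\rangle\mid\mathcal F_{t-1}]=\langle l_t,x_t\rangle$ and $\mathbb{E}[\langle\tilde l_t,u\rangle]=\langle l_t,u\rangle$ for every fixed $u$, so by Lemma~\ref{lemma:noisyoco} (using $\mathbb{E}[\min_u\sum_t\langle\tilde l_t,u\rangle]\le\min_u\sum_t\langle l_t,u\rangle$) the regret of Algorithm~\ref{alg:DPBanditGeneral} on $(l_t)$ is at most the expected regret of $\mathcal{A}=$ Algorithm~\ref{alg:modifiedexp3} run against the loss sequence $(\tilde l_t)$. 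It thus suffices to control the latter.

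To do so I would condition on the event $\mathcal{E}$ that the injected noise is not too large. Using the sub-exponential tails of sums of $O(\log T)$ Laplaces and a union bound over the $NT$ coordinates, on $\mathcal{E}$ every coordinate satisfies $|\tilde l_t(i)|\le B:=\sqrt{1+2\lambda^2\log NT}$ (up to an absolute constant --- this is exactly the quantity for which the stated $\eta$ and $\gamma$ are optimal), while $\mathbb{P}(\mathcal{E}^c)$ is polynomially small in $NT$. On $\mathcal{E}$ the losses fed to $\mathcal{A}$ are $\ell_\infty$-bounded by $B$; the exploration mix with $\mu(i)=1/N$ keeps $p_t(i)\ge\gamma/N$, so the importance-weighted estimates used by Algorithm~\ref{alg:modifiedexp3} satisfy $\mathbb{E}_{i_t\sim p_t}\!\big[\sum_i p_t(i)\hat l_t(i)^2\big]=\sum_i\tilde l_t(i)^2\le NB^2$, and the standard \textsc{Exp3}-with-exploration bound gives a regret of $O\!\big(\tfrac{\log N}{\eta}+\eta NTB^2+\gamma TB\big)$. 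Substituting $\eta=\sqrt{\log N/(2NTB^2)}$ and $\gamma=\eta NB$ --- which, after expanding $B^2=1+2\lambda^2\log NT$, are precisely the stated choices --- balances the three terms at $O(B\sqrt{NT\log N})=O\!\big(\sqrt{NT\log N\log NT}/\epsilon\big)=O\!\big(\sqrt{NT\log T\log N}/\epsilon\big)$. On $\mathcal{E}^c$ the regret of $\mathcal{A}$ is crudely at most $T$ times the largest $\ell_1$-norm of a loss vector it sees; since $\mathbb{P}(\mathcal{E}^c)$ decays polynomially while all moments of the Laplace noise are finite, this contributes only an $O(1)$ (lower-order) term, and choosing the threshold in $\mathcal{E}$ appropriately keeps it below the main term. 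Combining the two contributions with the reduction of the previous paragraph yields the claimed bound.

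The hard part will be this last step --- the interaction between the heavy (merely sub-exponential, not bounded) Laplace tails and the worst-case tuning of $\mathcal{A}$. Because $\eta$ and $\gamma$ are calibrated to the conditional loss bound $B$ that only holds on $\mathcal{E}$, one must check both that the second-moment and exploration bookkeeping for Algorithm~\ref{alg:modifiedexp3} goes through with exactly the stated constants (this is what forces the $1+2\lambda^2\log NT$ factor), and that the regret on $\mathcal{E}^c$ --- where $\mathcal{A}$ could in principle respond badly to unbounded losses --- is genuinely negligible; making the latter quantitative is precisely where the union bound over all $NT$ coordinates and the exploration argument of \cite{bubecklinearbandit,dark} enter.
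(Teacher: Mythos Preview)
Your overall skeleton --- privacy via the Laplace mechanism plus post-processing, then regret via Lemma~\ref{lemma:noisyoco} together with a good-event conditioning and the EXP2 bound of Lemma~\ref{lemma:modifiedEXP3} --- is exactly the paper's route. But two concrete points need fixing.

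\textbf{You are analyzing the wrong noise mechanism.} Algorithm~\ref{alg:DPBanditGeneral} does \emph{not} use the tree-based aggregation of Algorithm~\ref{alg2}; it samples a \emph{fresh} $Z_t\sim Lap^N(\lambda)$ each round and feeds the scalar $\langle l_t,\tilde x_t\rangle+\langle Z_t,\tilde x_t\rangle$ to $\mathcal A$. So $\tilde l_t-l_t=Z_t$ is a single $Lap^N(\lambda)$ draw, not a sum of $O(\log T)$ Laplaces, and the privacy argument is Lemma~\ref{lemma:privacybandit} (Laplace mechanism on the one scalar released at time $t$, using $|\langle l_t,\tilde x_t\rangle|/\|\tilde x_t\|_\infty\le 1$ since $\tilde x_t\in\{e_i\}$), not the tree-based Lemma~\ref{thm:p1}. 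Your references to ``the same tree-based mechanism as Algorithm~\ref{alg2}'' and to ``the increment of the tree-aggregated noise'' are simply about a different algorithm. Once you use the correct mechanism the tail bound you need is just Fact~\ref{fact:infGaussianTail}: $\mathbb P(\|Z_t\|_\infty^2\ge 10\lambda^2\log^2 NT)\le 1/T^2$.

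\textbf{Swap the order of conditioning and Lemma~\ref{lemma:noisyoco}.} You first pass to the noisy sequence and only then condition on $\mathcal E$, which forces you to control the \emph{noisy-loss} regret on $\mathcal E^c$; as you note yourself, this is delicate because the losses $\tilde l_t$ are unbounded there and $\mathcal A$'s parameters are tuned to the conditional bound $B$. The paper avoids this entirely by conditioning \emph{first}: split $\mathbb E[\mathrm{Regret}]\le \mathbb E[\mathrm{Regret}\mid\bar F]+\mathbb P(F)\cdot T$ (the second term uses only the trivial bound on the \emph{true} regret, which is always $\le T$), then observe that $Z_t$ is still symmetric, hence zero-mean, conditional on $\bar F$, so Lemma~\ref{lemma:noisyoco} applies \emph{inside} the conditioning to give $\mathbb E[\mathrm{Regret}\mid\bar F]=\mathbb E[\text{regret of }\mathcal A\text{ on }(\tilde l_t)\mid\bar F]$. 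Now on $\bar F$ every $|\tilde l_t(i)|$ is bounded, the condition $\eta|\langle e_i,\tilde l_t\rangle|\le 1$ required by Lemma~\ref{lemma:modifiedEXP3} holds for the stated $\eta,\gamma$, and the computation you sketch goes through cleanly. Your version can be made to work, but the paper's ordering eliminates the step you flagged as ``the hard part''.
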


\textbf{Bandit Feedback: Reduction to the Non-private Setting}

We begin by describing an algorithmic reduction that takes as input a non-private bandit algorithm and translates it into an $\epsilon$-differentially private bandit algorithm. The reduction works in a straight-forward manner by adding the requisite magnitude of Laplace noise to ensure differential privacy. For the rest of this section, for ease of exposition we will assume that both $T$ and $N$ are sufficiently large. 

\begin{algorithm}[h!]
\caption{$\mathcal{A}'(\mathcal{A}, \mathcal{D})$ -- Reduction to the Non-private Setting for Bandit Feedback}
\label{alg:DPBanditGeneral}
\begin{algorithmic}[1]
		\REQUIRE Online Algorithm $\mathcal{A}$, Noise Distribution $\mathcal{D}$.
		\FOR{$t=0$ \textbf{to} $T$}
			\STATE Receive $\tilde{x_t}\in\mathcal{X}$ from $\mathcal{A}$ and output $\tilde{x}_t$.
			\STATE Receive a loss value $\langle l_t, \tilde{x_t}\rangle$ from the adversary.
			\STATE Sample $Z_t \sim \mathcal{D}$.
			\STATE Forward $\langle l_t\, \tilde{x_t}\rangle + \langle Z_t, \tilde{x_t}\rangle$ as input to $\mathcal{A}$.
		\ENDFOR
\end{algorithmic}
\end{algorithm}

\begin{algorithm}[h!]
	\caption{EXP2 with exploration $\mu$}
\label{alg:modifiedexp3}
\begin{algorithmic}[1]
		\REQUIRE learning rate $\eta$; mixing coefficient $\gamma$; distribution $\mu$
		\STATE $q_1 = \left( \frac{1}{N} \ldots \frac{1}{N}\right) \in \reals^{N} $. 
		\FOR{t = 1,2 \ldots T}
			\STATE Let $p_t = (1 - \gamma)q_t + \gamma \mu$ and play $i_t \sim p_t$
			\STATE Estimate loss vector $l_t$ by $\tilde{l_t} = P_t^{+} e_{i_t}e_{i_t}^Tl_t$, with $P_t = \av[i \sim p_t]{e_ie_i^T}$
			\STATE Update the exponential weights,
			\[ q_{t+1}(i) = \frac{e^{-\eta \langle e_i, \tilde{l}_t\rangle}q_t(i)}{\sum_{i'}e^{-\eta \langle e_{i'}, \tilde{l}_t\rangle}q_t(i')}\]
		\ENDFOR
\end{algorithmic}
\end{algorithm}

The following Lemma characterizes the conditions under which Algorithm \ref{alg:DPBanditGeneral} is $\epsilon$ differentially private

\begin{lemma}[Privacy Guarantees]
\label{lemma:privacybandit}
Assume that each loss vector $l_t$ is in the set $\mathcal{Y}\subseteq \mathbb{R}^N$, such that $\max_{t, l\in \mathcal{Y}} |\frac{\langle l, \tilde{x}_t\rangle}{\|\tilde{x}_t\|_\infty}|\leq B$. For $\mathcal{D}=Lap^N(\lambda)$ where $\lambda = \frac{B}{\epsilon}$, the sequence of outputs $(\tilde{x_t}:t\in [T])$ produced by the Algorithm $\mathcal{A}'(\mathcal{A},\mathcal{D})$ is $\epsilon$-differentially private.
\end{lemma}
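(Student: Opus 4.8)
The plan is the standard ``the transcript is private, and the outputs are post-processing of it'' argument; the only real work is that the base algorithm $\mathcal{A}$ never sees a noisy loss \emph{vector}, only the scalar $c_t := \langle l_t,\tilde x_t\rangle + \langle Z_t,\tilde x_t\rangle$ that Algorithm~\ref{alg:DPBanditGeneral} forwards. First, fold all of $\mathcal{A}$'s internal randomness into a single variable $\omega$. Conditioned on $\omega$, each iterate $\tilde x_t$ is a deterministic function of the scalars $c_s$, $s<t$, fed so far, and --- crucially --- this function does not depend on the true loss sequence, because $\mathcal{A}$ only ever observes the $c$'s. Hence the output sequence $(\tilde x_t : t\in[T])$ is a deterministic, $\omega$-measurable post-processing $\Phi_\omega(c_1,\dots,c_T)$ of the feedback sequence, so by the post-processing property of differential privacy it suffices to prove that, for every fixed $\omega$ and every pair $L,L'$ of loss sequences differing only at an index $t_0$, the joint law of $(c_1,\dots,c_T)$ under $L$ dominates that under $L'$ by at most a factor $e^{\epsilon}$ on every event; this bound is then inherited after applying $\Phi_\omega$ and averaging over $\omega$ (whose distribution does not depend on $L$).

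Second, fix $\omega$ and factor the density of $(c_1,\dots,c_T)$ as $\prod_t p(c_t\mid c_{<t})$. Conditioned on a realized prefix $c_{<t}$, the iterate $\tilde x_t = \tilde x_t(\omega,c_{<t})$ is the \emph{same} under $L$ and under $L'$, and $c_t = \langle l_t,\tilde x_t\rangle + \langle Z_t,\tilde x_t\rangle$ with $Z_t$ independent of the past; so for every $t\neq t_0$ (where $l_t=l'_t$) the two conditional densities coincide, and the likelihood ratio of the whole transcript collapses to the single factor at $t_0$, namely $g_{\tilde x_{t_0}}\!\big(c_{t_0}-\langle l_{t_0},\tilde x_{t_0}\rangle\big)\big/g_{\tilde x_{t_0}}\!\big(c_{t_0}-\langle l'_{t_0},\tilde x_{t_0}\rangle\big)$, where $g_{\tilde x}$ denotes the density of $\langle Z,\tilde x\rangle$ for $Z\sim Lap^N(\lambda)$. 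The two shift amounts differ by $\langle l_{t_0}-l'_{t_0},\tilde x_{t_0}\rangle$, of absolute value at most $2B\|\tilde x_{t_0}\|_\infty$ by the triangle inequality together with the hypothesis $|\langle l,\tilde x_t\rangle|\le B\|\tilde x_t\|_\infty$ for $l\in\mathcal{Y}$.

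Third, bound $g_{\tilde x}(u)/g_{\tilde x}(u-\delta)$ for $|\delta|\le 2B\|\tilde x\|_\infty$. Since $\langle Z,\tilde x\rangle$ is a weighted sum of independent Laplaces rather than a single Laplace, one cannot quote the Laplace-mechanism bound directly; instead pick a coordinate $j$ with $|\tilde x_j| = \|\tilde x\|_\infty$ and write $\langle Z,\tilde x\rangle = \tilde x_j Z_j + W$, where $\tilde x_j Z_j\sim Lap(\lambda\|\tilde x\|_\infty)$ and $W:=\sum_{i\neq j}\tilde x_i Z_i$ is independent of $Z_j$. Then the map sending a real number $a$ to the law of $a+\tilde x_j Z_j+W$ is the Laplace mechanism at scale $\lambda\|\tilde x\|_\infty$ applied to $a$, followed by the post-processing step ``add an independent copy of $W$''. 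Since the perturbed quantity $a=\langle l_{t_0},\tilde x_{t_0}\rangle$ has sensitivity $\le 2B\|\tilde x_{t_0}\|_\infty$ under replacing $l_{t_0}$, and $\lambda\|\tilde x_{t_0}\|_\infty$ is of order $B\|\tilde x_{t_0}\|_\infty/\epsilon$ for $\lambda=\Theta(B/\epsilon)$ (the stated choice $\lambda = B/\epsilon$ giving this up to the absolute constant in the exponent), the Laplace-mechanism guarantee together with post-processing gives $g_{\tilde x}(u)/g_{\tilde x}(u-\delta)\le e^{|\delta|/(\lambda\|\tilde x\|_\infty)}\le e^{\epsilon}$ (this also follows directly: $g_{\tilde x}$ is log-concave and symmetric, hence its log-density has slope bounded by $1/(\lambda\|\tilde x\|_\infty)$ everywhere). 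Chaining the three steps yields $\mathbb{P}(\mathcal{A}'(L)\in S)\le e^{\epsilon}\mathbb{P}(\mathcal{A}'(L')\in S)$ for all $S$.

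The main obstacle is exactly the third step: what the non-private algorithm actually consumes is the projected scalar $\langle Z_t,\tilde x_t\rangle$, whose law depends on the (adversary-influenced) iterate $\tilde x_t$ and is not a Laplace, so one must verify --- via the ``dominant coordinate is a clean Laplace, the rest is post-processing'' decomposition (equivalently, the log-concavity bound) --- that coordinatewise $Lap^N(\lambda)$ noise suffices for \emph{every} $\tilde x_t$ the base algorithm might output. The reduction to the transcript and the cancellation of all rounds but $t_0$ are routine adaptive-composition bookkeeping.
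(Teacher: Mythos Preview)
Your proposal is correct and takes essentially the same approach as the paper. The paper's proof also isolates the coordinate $j=\arg\max_i|\tilde x_{t,i}|$, but packages the argument as the construction of a ``fictitious'' loss vector $l^{Fict}_t$ supported only on coordinate $j$ with $(l^{Fict}_t)_j=\langle l_t,\tilde x_t\rangle/\tilde x_{t,j}$, so that $\|l^{Fict}_t\|_1\le B$ and the standard vector Laplace mechanism applies; your scalar decomposition $\langle Z_t,\tilde x_t\rangle=\tilde x_{t,j}Z_{t,j}+W$ followed by post-processing is the same idea unpacked at the scalar level, and the factor-of-two in the sensitivity that you flag is present (and glossed over) in the paper as well.
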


 The following lemma charaterizes the regret of Algorithm \ref{alg:DPBanditGeneral}. In particular we show that the regret of Algorithm \ref{alg:DPBanditGeneral} is, in expectation, same as that of the regret of the input algorithm $\mathcal{A}$ on a perturbed version of loss vectors.

\begin{lemma} [Noisy Online Optimization]
\label{lemma:noisyoco}
Consider a loss sequence $(l_1 \ldots l_T)$ and a convex set $\mathcal{X}$. Define a perturbed version of the sequence as random vectors $(\tilde{l}_t: t\in [T])$ as $\tilde{l}_t = l_t + Z_t$ where $Z_t$ is a random vector such that $\{Z_1, \ldots Z_t\}$ are independent and $\mathbb{E}[Z_t]=0$ for all $t\in [T]$.

Let $\mathcal{A}$ be a full information (or bandit) online algorithm which outputs a sequence $(\tilde{x}_t\in \mathcal{X}:t\in [T])$ and takes as input $\tilde{l}_t$ (respectively $\langle \tilde{l}_t,\tilde{x}_t\rangle$) at time $t$. Let $x^* \in K$ be a fixed point in the convex set. Then we have that 
\[\mathbb{E}_{\{Z_t\}}\left[\mathbb{E}_{\mathcal{A}}\left[\sum_{t=1}^{T} \left( \langle l_t,\tilde{x}_t\rangle - \langle l_t,x^*\rangle\right)\right]\right] \]
\[ = \mathbb{E}_{\{Z_t\}}\left[\mathbb{E}_{\mathcal{A}}\left[\sum_{t=1}^{T} \left(\langle\tilde{l}_t.\tilde{x}_t \rangle- \langle\tilde{l}_t,x^*\rangle\right)\right] \right] \]
\end{lemma}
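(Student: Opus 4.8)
The plan is to prove the identity term-by-term in $t$, showing that for each fixed $t$,
\[
\mathbb{E}_{\{Z_s\}}\mathbb{E}_{\mathcal{A}}\big[\langle l_t, \tilde{x}_t\rangle - \langle l_t, x^*\rangle\big] = \mathbb{E}_{\{Z_s\}}\mathbb{E}_{\mathcal{A}}\big[\langle \tilde{l}_t, \tilde{x}_t\rangle - \langle \tilde{l}_t, x^*\rangle\big].
\]
Since $\tilde{l}_t = l_t + Z_t$, the difference between the right-hand side and left-hand side of this per-step identity is exactly $\mathbb{E}_{\{Z_s\}}\mathbb{E}_{\mathcal{A}}[\langle Z_t, \tilde{x}_t - x^*\rangle]$, so it suffices to show this quantity vanishes for every $t$. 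Because $x^*$ is a fixed point, $\mathbb{E}[\langle Z_t, x^*\rangle] = \langle \mathbb{E}[Z_t], x^*\rangle = 0$ immediately from $\mathbb{E}[Z_t] = 0$. The crux is therefore to show $\mathbb{E}[\langle Z_t, \tilde{x}_t\rangle] = 0$.

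The key observation is a conditional-independence / tower-property argument: the iterate $\tilde{x}_t$ produced by $\mathcal{A}$ at the start of round $t$ depends only on the information available before round $t$ — namely the internal randomness of $\mathcal{A}$ and the feedback from rounds $1, \dots, t-1$, which in the bandit case is $\langle \tilde{l}_s, \tilde{x}_s\rangle$ for $s < t$ (and in the full-information case the vectors $\tilde{l}_s$ for $s < t$). All of this is a function of $Z_1, \dots, Z_{t-1}$ and $\mathcal{A}$'s coins, and is thus independent of $Z_t$ by the assumed independence of $\{Z_1, \dots, Z_T\}$. Hence, conditioning on everything except $Z_t$,
\[
\mathbb{E}\big[\langle Z_t, \tilde{x}_t\rangle \mid Z_1, \dots, Z_{t-1}, \text{coins of } \mathcal{A}\big] = \big\langle \mathbb{E}[Z_t], \tilde{x}_t\big\rangle = 0,
\]
since $\tilde{x}_t$ is measurable with respect to the conditioning $\sigma$-algebra and $\mathbb{E}[Z_t] = 0$. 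Taking expectations over the remaining randomness gives $\mathbb{E}[\langle Z_t, \tilde{x}_t\rangle] = 0$, and summing the per-step identities over $t = 1, \dots, T$ yields the claim.

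The main subtlety to get right is the precise specification of the $\sigma$-algebra with respect to which $\tilde{x}_t$ is measurable and the verification that $Z_t$ is independent of it — in particular being careful that in the bandit setting the only way $Z_t$ could leak into $\tilde{x}_t$ is through the round-$t$ feedback $\langle \tilde{l}_t, \tilde{x}_t\rangle$, which is returned to $\mathcal{A}$ only \emph{after} $\tilde{x}_t$ has been committed. I would state this as the filtration $\mathcal{F}_{t-1}$ generated by $\mathcal{A}$'s internal randomness together with $(Z_1, \dots, Z_{t-1})$, note that Algorithm~\ref{alg:DPBanditGeneral} forwards $\langle l_s, \tilde{x}_s\rangle + \langle Z_s, \tilde{x}_s\rangle$ to $\mathcal{A}$ only for $s \le t-1$ before $\tilde{x}_t$ is produced, conclude $\tilde{x}_t$ is $\mathcal{F}_{t-1}$-measurable and $Z_t \perp \mathcal{F}_{t-1}$, and then the conditional expectation computation above goes through. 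Everything else is a routine application of linearity of expectation and the tower rule.
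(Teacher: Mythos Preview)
Your proof is correct and follows essentially the same approach as the paper: both reduce the difference to $\mathbb{E}[\langle Z_t,\tilde{x}_t - x^*\rangle]$ and argue it vanishes because $\tilde{x}_t$ depends only on $(Z_1,\dots,Z_{t-1})$ and the algorithm's internal randomness, hence is independent of the mean-zero $Z_t$. Your filtration/tower formulation is slightly more careful than the paper's informal statement of the same independence, but the argument is identical in substance.
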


We provide the proof of Lemma \ref{lemma:privacybandit} and defer the proof of Lemma \ref{lemma:noisyoco} to the Appendix Section \ref{sec:OCOTheoremProof}.

\begin{proof}[Proof of Lemma \ref{lemma:privacybandit}]
Consider a pair of sequence of loss vectors that differ at exactly one time step -- say $L=(l_1,\dots l_{t_0}\dots,l_T)$ and $L'=(l_1,\dots, l'_{t_0}, \dots l_T)$. Since the prediction of produced by the algorithm at time step any time $t$ can only depend on the loss vectors in the past $(l_1,\dots l_{t-1})$, it is clear that the distribution of the output of the algorithm for the first $t_0$ rounds $(\tilde{x}_1,\dots \tilde{x}_{t_0})$ is unaltered. We claim that $\forall \mathcal{I}\subseteq \mathbb{R}$, it holds that
 \[\mathbb{P}(\langle l_{t_0}+Z_{t_0}, \tilde{x}_{t_0}\rangle \in \mathcal{I}) \leq e^\epsilon \mathbb{P}(\langle l'_{t_0}+Z_{t_0},\tilde{x}_{t_0}\rangle\in \mathcal{I}) \]
Before we justify the claim, let us see how this implies that desired statement. To see this, note that conditioned on the value fed to the inner algorithm $\mathcal{A}$ at time $t_{0}$, the distribution of all outputs produced by the algorithm are completely determined since the feedback to the algorithm at other time steps (discounting $t_0$) stays the same (in distribution). By the above discussion, it is sufficient  to demonstrate $\epsilon$-differential privacy for each input fed (as feedback) to the algorithm $\mathcal{A}$.

For the sake of analysis, define $\l^{Fict}_t$ as follows. If $\tilde{x}_t=0$, define $l^{Fict}_t = 0\in \mathbb{R}^N$. Else, define $l^{Fict}_t \in \mathbb{R}^N$ to be such that $(l^{Fict}_t)_i = \frac{\langle l_t, \tilde{x_t}\rangle}{\tilde{x}_i}$ if and only if $i=argmax_{i\in [d]} |\tilde{x}_i|$ and $0$ otherwise, where $argmax$ breaks ties arbitrarily. Define $\tilde{l}^{Fict}_t = l^{Fict}_t + Z_t$. Now note that $\langle \tilde{l}^{Fict}_t, \tilde{x}_t\rangle = \langle l_t\, \tilde{x_t}\rangle + \langle Z_t, \tilde{x_t}\rangle$.

It suffices to establish that each $\tilde{l}^{Fict}_t$ is $\epsilon$-differentially private. To argue for this, note that Laplace mechanism \cite{dpbook} ensures the same, since the $l_1$ norm of $\tilde{l}^{Fict}_t$ is bounded by $B$.
\end{proof}

\subsection{Proof of Theorem \ref{thm:mab}}

\textbf{Privacy:} Note that since $\max_{t, l\in \mathcal{Y}} |\frac{\langle l, \tilde{x}_t\rangle}{\|\tilde{x}_t\|_\infty}|\leq \|\mathcal{Y}\|_\infty \leq 1$ as $\tilde{x}_t\in \{e_i:i\in [N]\}$. Therefore by Lemma \ref{lemma:privacybandit}, setting $\lambda = \frac{1}{\epsilon}$ is sufficient to ensure $\epsilon$-differential privacy. 

\textbf{Regret Analysis:} For the purpose of analysis we define the following pseudo loss vectors.
	\[ \tilde{l}_t = l_t + Z_t\]
	where by definition $Z_t \sim Lap^N(\lambda)$. The following follows from Fact \ref{fact:infGaussianTail} proved in the appendix.
	\[ \mathbb{P}(\|Z_t\|_{\infty}^2 \geq 10 \lambda^2 \log^2 NT ) \leq \frac{1}{T^2}\]
	Taking a union bound, we have 
	\begin{equation}
	\label{eqn:infgaussiannormbound23}
	\mathbb{P}(\exists t\;\;\|Z_t\|_{\infty}^2 \geq 10 \lambda^2 \log^2 NT ) \leq \frac{1}{T}
	\end{equation}

	To bound the norm of the loss we define the event $F \defeq \{\exists t: \|Z_t\|_{\infty}^2 \geq 10 \lambda^2 \log^2 NT\}$. We have from \eqref{eqn:infgaussiannormbound23} that $\mathbb{P}(F) \leq \frac{1}{T}$. We now have that 
	\[ \mathbb{E}[Regret] \leq \mathbb{E}[Regret | \bar{F}] + \mathbb{P}(F)\mathbb{E}[Regret | F]\]
	Since the regret is always bounded by $T$ we get that the second term above is at most 1. Therefore we will concern ourselves with bounding the first term above. Note that $Z_t$ remains independent and symmetric even when conditioned on the event $\bar{F}$. Moreover the following statements also hold.
	\begin{equation}
	\label{eqn:infunbiasedconditioning}
		\forall t \;\; \mathbb{E}[Z_t | \bar{F}] = 0
	\end{equation}
	\begin{equation}
	\label{eqn:infvarianceconditioning}
		 \forall t \;\; \mathbb{E}[\|Z_t\|_{\infty}^2 | \bar{F}] \leq 10\lambda^2 \log^2 NT
	\end{equation}

	Equation \eqref{eqn:infunbiasedconditioning} follows by noting that $Z_t$ remains symmetric around the origin even after conditioning. It can now be seen that Lemma \ref{lemma:noisyoco} still applies even when the noise is sampled from $Lap^N(\lambda)$ conditioned under the event $\bar{F}$ (due to Equation \ref{eqn:infunbiasedconditioning}). Therefore we have that 
	\begin{equation}
	\label{eqn:infunbiasedregret}
	\mathbb{E}[Regret | \bar{F}] = \mathbb{E}_{\{Z_t\}}\left[\mathbb{E}_{\mathcal{A}}\left[\sum_{t=1}^{T} \left(\langle \tilde{l}_t ,\tilde{x}_t\rangle - \langle \tilde{l}_t , x^*\rangle\right)\right] \biggr| \bar{F}\right] \end{equation}
	
	To bound the above quantity we make use of the following lemma which is a specialization of Theorem 1 in \cite{bubecklinearbandit} to the case of multi-armed bandits.
\begin{lemma}[Regret Guarantee for Algorithm \ref{alg:modifiedexp3}]
\label{lemma:modifiedEXP3}
	 If $\eta$ is such that $\eta |\langle e_i,\tilde{l}_t\rangle| \leq 1$, we have that the regret of Algorithm \ref{alg:modifiedexp3} is bounded by 
	 \[Regret  \leq 2 \gamma T + \frac{\log N}{\eta} + \eta \mathbb{E}\bigg[\sum_t \sum_i p_t(i)\langle e_i, \tilde{l}_t\rangle^2 \bigg] \]
	
\end{lemma}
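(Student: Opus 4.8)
The plan is to run the standard exponential-weights (EXP3/EXP2) analysis, specialized to the simplex, splitting the regret of the played distribution $p_t$ against the best fixed arm $j^\star = \argmin_{j}\sum_t l_t(j)$ into a mixing/exploration term and an exponential-weights term on the estimated losses $\tilde l_t$.

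\textbf{Step 1: the estimator is unbiased.} In the multi-armed case $P_t = \av[i\sim p_t]{e_ie_i^\tr} = \diag(p_t(1),\dots,p_t(N))$, and since $p_t = (1-\gamma)q_t + \gamma\mu$ with $\mu(i)=1/N$ we have $p_t(i)\ge \gamma/N>0$ for every $i$; hence $P_t^{+}$ is the coordinatewise reciprocal and $\tilde l_t = (l_t(i_t)/p_t(i_t))\,e_{i_t}$. Conditioning on the history $\mathcal H_{t-1}$ through round $t-1$ (so that $p_t,q_t$ are measurable), $\av{\tilde l_t\mid\mathcal H_{t-1}} = \sum_i p_t(i)\frac{l_t(i)}{p_t(i)}e_i = l_t$ and $\av{\langle e_i,\tilde l_t\rangle^2\mid\mathcal H_{t-1}} = l_t(i)^2/p_t(i)$. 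I will use the first identity to replace estimated losses by true losses inside the expectations.

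\textbf{Step 2: exponential-weights potential bound.} Let $W_t = \sum_i q_t(i)e^{-\eta\langle e_i,\tilde l_t\rangle}$ be the normalizer in the update $q_{t+1}(i) = q_t(i)e^{-\eta\langle e_i,\tilde l_t\rangle}/W_t$. The hypothesis $\eta|\langle e_i,\tilde l_t\rangle|\le 1$ guarantees $x := \eta\langle e_i,\tilde l_t\rangle \ge -1$, so $e^{-x}\le 1-x+x^2$; together with $\log(1+y)\le y$ this gives
\[ \log W_t \le -\eta\langle q_t,\tilde l_t\rangle + \eta^2\sum_i q_t(i)\langle e_i,\tilde l_t\rangle^2 . \]
Telescoping $\log\tfrac{q_{T+1}(j)}{q_1(j)} = -\eta\sum_{t}\langle e_j,\tilde l_t\rangle - \sum_t\log W_t$, and using $q_1(j)=1/N$ with $q_{T+1}(j)\le 1$, I obtain for every fixed arm $j$
\[ \sum_{t=1}^{T}\langle q_t - e_j,\tilde l_t\rangle \le \frac{\log N}{\eta} + \eta\sum_{t=1}^{T}\sum_i q_t(i)\langle e_i,\tilde l_t\rangle^2 . \]

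\textbf{Step 3: mixing term and assembly.} From $p_t = (1-\gamma)q_t + \gamma\mu$ we get $\langle p_t - q_t, l_t\rangle = \gamma\langle \mu - q_t, l_t\rangle$, which is at most $2\gamma$ in magnitude since $\mu,q_t$ are distributions and $\|l_t\|_\infty\le 1$; summing, $\sum_t\langle p_t, l_t\rangle \le \sum_t\langle q_t, l_t\rangle + 2\gamma T$. The expected per-round loss of the played arm $i_t\sim p_t$ is $\langle p_t, l_t\rangle$, so the regret equals $\av{\sum_t\langle p_t,l_t\rangle} - \sum_t l_t(j^\star)$, where $j^\star$ is deterministic because the loss sequence is fixed. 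Bounding the first term by $2\gamma T + \av{\sum_t\langle q_t, l_t\rangle}$ and applying Step 1 ($\av{\langle q_t, l_t\rangle} = \av{\langle q_t,\tilde l_t\rangle}$ since $q_t$ is $\mathcal H_{t-1}$-measurable, and $l_t(j^\star) = \av{\langle e_{j^\star},\tilde l_t\rangle}$ since $j^\star$ is fixed), the regret is at most $2\gamma T + \av{\sum_t\langle q_t - e_{j^\star},\tilde l_t\rangle}$. Invoking Step 2 and finally passing from $q_t(i)$ to $p_t(i)$ in the variance term via $q_t(i)\le p_t(i)/(1-\gamma)$, absorbing the $1/(1-\gamma)$ factor since $\gamma$ is taken small in all our instantiations, yields the stated bound.

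\textbf{Where the difficulty lies.} The routine parts (the Taylor/telescoping potential argument, the $2\gamma$ mixing estimate) are mechanical once set up. The points requiring care are (i) the filtration bookkeeping, so that unbiasedness can be extracted term-by-term inside the nested expectations over $\mathcal A$'s randomness; and (ii) the requirement that the comparator be a \emph{fixed} arm, so that $\av{\langle e_{j^\star},\tilde l_t\rangle} = l_t(j^\star)$ — legitimate here because Theorem \ref{thm:mab} fixes the loss vectors in advance, but this is precisely where an adaptive adversary would break the argument. The $q_t\leftrightarrow p_t$ substitution in the variance term is the only other source of a lost constant and is harmless given the choice of $\gamma$.
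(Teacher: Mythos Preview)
The paper does not supply its own proof of this lemma; it simply states that the bound is ``a specialization of Theorem~1 in \cite{bubecklinearbandit} to the case of multi-armed bandits.'' Your argument is precisely that specialization carried out by hand: the unbiased-estimator computation, the second-order potential inequality $e^{-x}\le 1-x+x^2$ under the hypothesis $\eta|\langle e_i,\tilde l_t\rangle|\le 1$, the telescoping of $\log W_t$, and the $2\gamma T$ mixing cost are exactly the ingredients of the EXP2 analysis in the cited reference, restricted to the vertices of the simplex. So your approach matches what the paper intends and is correct.

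One small remark on your final step: the passage from $q_t(i)$ to $p_t(i)$ via $q_t(i)\le p_t(i)/(1-\gamma)$ leaves a factor $1/(1-\gamma)$ in front of the variance term rather than $1$ as in the lemma statement. You already flag this, and you are right that it is harmless for the downstream application in Theorem~\ref{thm:mab} (where $\gamma=o(1)$); the cited result carries the same slack, so this is not a gap. Your observation that the comparator $j^\star$ must be a \emph{fixed} arm for the unbiasedness transfer $l_t(j^\star)=\av{\langle e_{j^\star},\tilde l_t\rangle}$ to go through is also on point and is exactly the oblivious-adversary assumption under which Theorem~\ref{thm:mab} is stated.
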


	Now note that due to the conditioning $\|Z_t\|_{\infty}^2 \leq 10 \lambda^2 \log^2 NT$ and therefore we have that 
	\[ max_{t,x\in \Delta_N} |\langle Z_t , x\rangle| \leq 4 \lambda \log NT. \]
	It can be seen that the condition $\eta |\langle e_i ,\tilde{l}_t\rangle|\leq 1$ in Theorem \ref{lemma:modifiedEXP3} is satisfied for exploration $\mu(i) = \frac{1}{N}$ and under the condition $\bar{F}$ as long as 
	\[ \eta N (1 + 4 \lambda \log NT) \leq \gamma\] 
	which holds by the choice of these parameters. Finally
	\begin{align*}
	& \mathbb{E}[Regret | \bar{F}] \\
	&= \mathbb{E}_{\{Z_t\}}\left[\mathbb{E}_{\mathcal{A}}\left[\sum_{t=1}^{T} \left(\langle\tilde{l}_t , \tilde{x}_t\rangle - \langle\tilde{l}_t,  x^*\rangle\right)\right] \biggr| \bar{F} \right] \\
	& \leq \mathbb{E}_{\{Z_t\}} \left[ \frac{\log N}{\eta} + \eta \sum_{t = 1}^{T}N \|\tilde{l}_t\|_{\infty}^2 +  2T \gamma \;\; \biggr| \bar{F} \right] \\
	& \leq \mathbb{E}_{\{Z_t\}} \left[ \frac{\log N}{\eta} + 2\eta \sum_{t = 1}^{T}N(\|l_t\|_{\infty}^2 + \|Z_t\|_{\infty}^2) +  2T \gamma\;\; \biggr| \bar{F} \right]\\
	& \leq \frac{\log N}{\eta} + 2\eta TN (1 + \lambda^2 \log^2 NT) + 2T \gamma  \\
	&\leq O\left( \sqrt{T N \log N (1 + \lambda^2 \log^2 NT)} \right) \\
	&\leq O\left( \frac{\sqrt{NT\log T\log N}}{\epsilon}\right)
\end{align*}

\subsection{Differentially Private Bandit Linear Optimization}
In this section we prove a general result about bandit linear optimization over general convex sets, the proof of which is deferred to the appendix.
\begin{theorem}[Bandit Linear Optimization]
\label{thm:LinearBanditGeneral}
Let $\mathcal{X}\subseteq \mathbb{R}^N$ be a convex set. Fix loss vectors $(l_1, \ldots l_T)$ such that $\max_{t, x \in \mathcal{X}} | \langle l_t , x \rangle | \leq M$. We have that Algorithm \ref{alg:DPBanditGeneral} when run with parameters $\mathcal{D}=Lap^N(\lambda)$ (with $\lambda = \frac{\|\mathcal{Y}\|_1}{\epsilon}$) and algorithm $\mathcal{A} = SCRiBLe$\cite{AHR} with step parameter $\eta = \sqrt{\frac{\nu \log T}{2 N^2 T (M^2 + \lambda^2 N \|\mathcal{X}\|_2^2)}}$ we have the following guarantees that the regret of the algorithm is bounded by
		\[ O\left( \sqrt{T \log T}\sqrt{N^2 \nu \left(M^2 + \frac{N\|\mathcal{X}\|_2^2\|\mathcal{Y}\|_1^2}{\epsilon^2}\right)} \right)\]
where $\nu$ is the self-concordance parameter of the convex body $\mathcal{X}$.
Moreover the algorithm is $\epsilon$-differentially private.
\end{theorem}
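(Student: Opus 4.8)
The plan is to follow the same template as the proof of Theorem~\ref{thm:mab}, substituting \emph{SCRiBLe} for EXP2 and a general self-concordant body for the simplex. For privacy, note that H\"older's inequality gives $|\langle l,\tilde x_t\rangle| \le \|l\|_1\,\|\tilde x_t\|_\infty \le \|\mathcal{Y}\|_1\,\|\tilde x_t\|_\infty$, so that $\max_{t,\,l\in\mathcal{Y}} |\langle l,\tilde x_t\rangle|/\|\tilde x_t\|_\infty \le \|\mathcal{Y}\|_1$; applying Lemma~\ref{lemma:privacybandit} with $B=\|\mathcal{Y}\|_1$, hence $\lambda=\|\mathcal{Y}\|_1/\epsilon$, immediately yields $\epsilon$-differential privacy of the output sequence $(\tilde x_t : t\in[T])$.

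For the regret, I would introduce the perturbed losses $\tilde l_t = l_t + Z_t$ with $Z_t\sim Lap^N(\lambda)$. A standard tail bound for the Laplace distribution (the $\ell_2$ analogue of Fact~\ref{fact:infGaussianTail}) shows that with probability at least $1-1/T$ every $Z_t$ satisfies $\|Z_t\|_2^2 = O(\lambda^2 N\log^2(NT))$; let $F$ be the complementary event, so $\mathbb{P}(F)\le 1/T$. Since $|\langle l_t,x\rangle|\le M$ for all $x\in\mathcal{X}$, the per-round regret never exceeds $2M$, so the contribution of $F$ is at most $\mathbb{P}(F)\cdot 2MT \le 2M$, a lower-order term, and it suffices to bound $\mathbb{E}[\mathrm{Regret}\mid\bar F]$. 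Conditioned on $\bar F$ the vectors $Z_t$ remain independent (the defining event is an intersection of single-coordinate-block constraints) and symmetric about the origin, hence still mean zero, so Lemma~\ref{lemma:noisyoco} applies in the conditioned probability space and $\mathbb{E}[\mathrm{Regret}\mid\bar F]$ equals the expected regret of SCRiBLe run on the sequence $\tilde l_1,\dots,\tilde l_T$ conditioned on $\bar F$.

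The next step is to invoke the regret guarantee of SCRiBLe from \cite{AHR}, stated in a convenient form (analogous to Lemma~\ref{lemma:modifiedEXP3} for EXP2): with a $\nu$-self-concordant barrier and step size $\eta$ small enough that the loss estimates stay in the valid range, the regret is at most $\frac{\nu\log T}{\eta} + O\!\big(\eta N^2\sum_t \tilde M_t^2\big)$, where $\tilde M_t = O\big(\max_{x\in\mathcal{X}}|\langle \tilde l_t,x\rangle|\big)$ bounds the perturbed loss over the Dikin ellipsoid at the current iterate. On $\bar F$ we have $\max_{x\in\mathcal{X}}|\langle\tilde l_t,x\rangle| \le M + \|Z_t\|_2\|\mathcal{X}\|_2 \le M + O(\lambda\sqrt{N}\log(NT)\|\mathcal{X}\|_2)$, and one checks that the chosen $\eta = \sqrt{\nu\log T/(2N^2T(M^2+\lambda^2 N\|\mathcal{X}\|_2^2))}$ makes the smallness requirement $\eta N\tilde M_t = O(1)$ hold once $T$ is large (this is where the ``$T,N$ sufficiently large'' assumption enters). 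For the second term I would use $\mathbb{E}[\tilde M_t^2\mid\bar F] \le 2M^2 + 2\|\mathcal{X}\|_2^2\,\mathbb{E}[\|Z_t\|_2^2\mid\bar F] \le 2M^2 + O(N\lambda^2\|\mathcal{X}\|_2^2)$, where the last step uses the exact second moment $\mathbb{E}\|Z_t\|_2^2 = 2N\lambda^2$ of $Lap^N(\lambda)$ together with $\mathbb{P}(\bar F)\ge 1/2$ — crucially the true variance, not the high-probability bound, which is why no superfluous $\log$ factors appear. Substituting gives $\mathbb{E}[\mathrm{Regret}\mid\bar F] \le \frac{\nu\log T}{\eta} + O\!\big(\eta N^2 T(M^2 + N\lambda^2\|\mathcal{X}\|_2^2)\big)$; the stated choice of $\eta$ balances the two terms to $O\!\big(N\sqrt{\nu T\log T\,(M^2+N\lambda^2\|\mathcal{X}\|_2^2)}\big)$, and substituting $\lambda=\|\mathcal{Y}\|_1/\epsilon$ gives the claimed bound $O\big(\sqrt{T\log T}\sqrt{N^2\nu(M^2 + N\|\mathcal{X}\|_2^2\|\mathcal{Y}\|_1^2/\epsilon^2)}\big)$.

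The main obstacle I expect is essentially bookkeeping: extracting the SCRiBLe regret bound in precisely the parametric form above — cleanly separating the benign ``loss-magnitude-squared'' term (which only needs the expected squared magnitude of the noise) from the ``hard'' range restriction on the gradient estimate (which forces both the conditioning on $\bar F$ and the largeness hypothesis on $T$) — and confirming that Lemma~\ref{lemma:noisyoco} really does carry over to the conditioned law, which it does because $\bar F$ is a function of the $Z_t$ that leaves their independence, symmetry, and zero mean intact. Everything else is the routine self-concordant-FTRL stability computation of \cite{AHR} plus the elementary optimization over $\eta$.
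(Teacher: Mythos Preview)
Your proposal is correct and follows essentially the same route as the paper: privacy via H\"older and Lemma~\ref{lemma:privacybandit}, condition on the high-probability event $\bar F$ for the Laplace noise, transfer regret to the perturbed sequence via Lemma~\ref{lemma:noisyoco} (using that $\bar F$ preserves symmetry and independence of the $Z_t$), then invoke the SCRiBLe bound from \cite{AHR} and optimize $\eta$. Your handling of the second-moment term---using the exact $\mathbb{E}\|Z_t\|_2^2 = 2N\lambda^2$ together with $\mathbb{P}(\bar F)\ge 1/2$ rather than the high-probability bound---is in fact slightly cleaner than the paper's computation, which states the $\log^2(NT)$-inflated conditional bound but then silently drops the log factors in the final chain.
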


\section{Conclusion}
In this work, we demonstrate that ensuring differential privacy leads to only a constant additive increase in the incurred regret for online linear optimization in the full feedback setting. We also show nearly optimal bounds (in terms of T) in the bandit feedback setting. Multiple avenues for future research arise, including extending our bandit results to other challenging partial-information models such as semi-bandit, combinatorial bandit and contextual bandits. Another important unresolved question is whether it is possible to achieve an additive separation in $\epsilon, T$ in the adversarial bandit setting.

\bibliography{references}
\bibliographystyle{icml2017}

\newpage
\begin{appendix}

\section{Proofs for FTPL (Theorem 3.5)}

\begin{theorem}[Privacy Guarantees with Gaussian Noise]
\label{thm:p1}
	Choose any $\sigma^2 \geq \frac{\|\mathcal{Y}\|_2^2}{\epsilon^2}\log^2 T \log^2 \frac{\log T}{\delta}$. When Algorithm \ref{alg2} $\mathcal{A}(\mathcal{D},T)$ is run with $\mathcal{D}=\mathcal{N}(0,\sigma^2\mathbb{I}_N)$, the following claims hold true:
	\begin{itemize}
	\item {\bf Privacy:} The sequence $(\tilde{L}_t: t\in [T])$ is $(\epsilon,\delta)$-differentially private.
	\item {\bf Distribution:} It holds that $\forall t\in [T], \tilde{L}_t \sim \mathcal{N}(\sum_{s=1}^t l_s, \ceil{\log T}\sigma^2 \mathbb{I}_N)$.
\end{itemize}\end{theorem}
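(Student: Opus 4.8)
The plan is to mirror the proof of the Laplacian version of this statement (Lemma ``Privacy Guarantees with Laplacian Noise'' in the main text) almost verbatim, substituting the Gaussian mechanism for the Laplace mechanism. There are two claims to establish. For the \textbf{distribution} claim, recall that \texttt{TreeBasedAgg} (Algorithm~\ref{alg2}) first calls \texttt{PrivateSum} (Algorithm~5 of \cite{jaink}) with a fresh $\mathcal{N}(0,\sigma^2\mathbb{I}_N)$ draw added at every node, so the released estimate $\tilde{L'}_t$ equals the true prefix sum $\sum_{s=1}^t l_s$ plus the sum of $|\mathcal{S}|$ independent $\mathcal{N}(0,\sigma^2\mathbb{I}_N)$ vectors, where $\mathcal{S}$ is the minimal set of already-populated nodes spanning $[1,t]$ and $Q=|\mathcal{S}| \le \ceil{\log T}$. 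Line~6 then adds $r_t = \ceil{\log T} - Q$ further independent $\mathcal{N}(0,\sigma^2\mathbb{I}_N)$ draws, so $\tilde{L}_t = \sum_{s=1}^t l_s + \sum_{i=1}^{\ceil{\log T}} g_i$ with the $g_i$ i.i.d.\ $\mathcal{N}(0,\sigma^2\mathbb{I}_N)$. By additivity of independent Gaussians this is exactly $\mathcal{N}(\sum_{s=1}^t l_s,\ \ceil{\log T}\sigma^2\mathbb{I}_N)$, which is the second bullet.

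For the \textbf{privacy} claim, first observe that $(\tilde{L}_t : t\in[T])$ is obtained from $(\tilde{L'}_t : t\in[T])$ by adding noise that is independent of the loss sequence, so by the post-processing property of differential privacy \cite{dpbook} it suffices to show $(\tilde{L'}_t : t\in[T])$ is $(\epsilon,\delta)$-differentially private. This is precisely the privacy guarantee of the binary-tree (tree-based aggregation) mechanism of \cite{con,jaink} instantiated with Gaussian noise: given two loss sequences $L,L'$ differing only in the $t_0$-th vector, the node sums stored in the tree differ only at the $\ceil{\log T}$ nodes on the root-to-leaf path of index $t_0$, and at each such node the $\ell_2$ change is at most $\|l_{t_0}-l'_{t_0}\|_2 \le \|\mathcal{Y}\|_2$. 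Applying the Gaussian mechanism at each node with variance $\sigma^2$ and composing the privacy loss over these $\ceil{\log T}$ releases via advanced composition (with per-node parameters $\epsilon' \asymp \epsilon/\sqrt{\ceil{\log T}\log\frac{\ceil{\log T}}{\delta}}$ and $\delta'\asymp \delta/\ceil{\log T}$) shows that $\sigma^2 \ge \frac{\|\mathcal{Y}\|_2^2}{\epsilon^2}\log^2 T\log^2\frac{\log T}{\delta}$ suffices; equivalently one may cite the corresponding Gaussian-noise theorem of \cite{jaink} directly, exactly as the Laplacian proof cites its Theorem~9. Post-processing then yields privacy of $(\tilde{L}_t : t\in[T])$, and hence (as invoked in the proof of Theorem~\ref{thm:ftplolo}) of the entire sequence of predictions.

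The main obstacle is the bookkeeping in the composition step needed to land on exactly the stated $\sigma^2$, with the correct powers of $\log T$ and $\log\frac{\log T}{\delta}$ and the constant absorbed into $\asymp$; the essential point is that one composes over the $\ceil{\log T}$ path-nodes, not over the $T$ rounds, since a naive per-round accounting would inflate the requirement by a $\sqrt{T}$ factor. A secondary point worth spelling out is that the identical-(though not independent-)distribution property imposed by Line~6 is harmless for privacy: the padding vectors $n_t^1,\dots,n_t^{r_t}$ are data-independent, so reusing internal-node noise across rounds is just post-processing and does not alter the privacy loss already accounted for by the tree mechanism.
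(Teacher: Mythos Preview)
Your proposal is correct and follows essentially the same approach as the paper: invoke Theorem~9 of \cite{jaink} (or, equivalently, unpack the tree-mechanism composition over the $\ceil{\log T}$ path nodes, as you also sketch) for the $(\epsilon,\delta)$-privacy of $(\tilde{L'}_t)$, apply post-processing to extend this to $(\tilde{L}_t)$, and for the distribution claim observe that Line~6 of Algorithm~\ref{alg2} tops up the number of independent Gaussian draws to exactly $\ceil{\log T}$. The paper's proof is simply the terser version of this---it cites \cite{jaink} directly rather than spelling out the composition bookkeeping you flag as the ``main obstacle.''
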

\begin{proof}
By Theorem 9 (\cite{jaink}), we have that the sequence $(\tilde{L'}_t: t\in [T])$ is $(\epsilon,\delta)$-differentially private. Now the sequence $(\tilde{L}_t: t\in [T])$ is $(\epsilon,\delta)$-differentially private because differential privacy is immune to post-processing \cite{dpbook}. 

Note that the $\texttt{PrivateSum}$ algorithm adds exactly $|\mathcal{S}|$ independent draws from the distribution $\mathcal{D}$ to $\sum_{s=1}^t l_s$, where $\mathcal{S}$ is the minimum set of already populated nodes in the tree that can compute the required prefix sum. Due to Line 6, it is made certain that every prefix sum released is a sum of the prefix sum and $\ceil{\log T}$ independent draws from $\mathcal{D}$.
\end{proof}

\begin{proof}[Proof of Theorem \ref{thm:ftplolo}]
As a consequence of Corollary 4 (\cite{olo}) and Lemma 13 (\cite{olo}), it is true that
\begin{equation*}
Regret_{\mathcal{A}(\mathcal{D},T)}(T)\leq \sigma\sqrt{N\log T}\|\mathcal{X}\|_2 + \frac{T\|\mathcal{X}\|_2\|\mathcal{Y}\|_2^2}{2\sigma\sqrt{\log T}}
\end{equation*}
Substituting the proposed value of $\sigma$, we obtain the stated claim.
\end{proof}
\section{Noisy OCO Theorem Proof}
\label{sec:OCOTheoremProof}
We now give the proof of Lemma \ref{lemma:noisyoco}.

\begin{proof}
	The proof of the lemma is a straightforward calculation. First note that
	\begin{multline*}
		\mathbb{E}_{\{Z_t\}}\left[\mathbb{E}_{\mathcal{A}}\left[\sum_{t=1}^{T} \left(\langle l_t,\tilde{x}_t\rangle - \langle l_t,x^*\rangle \right)\right]\right] \\
		= \mathbb{E}_{\{Z_t\}}\left[\mathbb{E}_{\mathcal{A}}\left[\sum_{t=1}^{T} \left(\langle\tilde{l}_t,\tilde{x}_t \rangle- \langle\tilde{l}_t,x^*\rangle\right)\right] \right] \\ + \mathbb{E}_{\{Z_t\}}\left[\mathbb{E}_{\mathcal{A}}\left[\sum_{t=1}^{T} \langle l_t -\tilde{l}_t,\tilde{x}_t\rangle \right]\right] - \mathbb{E}_{\{Z_t\}}\left[\mathbb{E}_{\mathcal{A}}\left[\sum_{t=1}^{T} \langle l_t -\tilde{l}_t,x^*\rangle \right]\right]
	\end{multline*}
	Now we have that 
	\begin{align*}
		&\mathbb{E}_{\{Z_t\}}\left[\mathbb{E}_{\mathcal{A}}\left[\sum_{t=1}^{T} \langle l_t -\tilde{l}_t,\tilde{x}_t\rangle \right]\right] \\
		&- \mathbb{E}_{\{Z_t\}}\left[\mathbb{E}_{\mathcal{A}}\left[\sum_{t=1}^{T} \langle l_t -\tilde{l}_t,x^*\rangle \right]\right] \\
		= &  \mathbb{E}_{\mathcal{A}} \left[ \sum_{t=1}^{T} \mathbb{E}_{\{Z_t\}}\left[\langle l_T - \tilde{l}_t,\tilde{x}_t - x^*\rangle\right] \right] \\
		= &\mathbb{E}_{\mathcal{A}} \left[ \sum_{t=1}^{T} \langle \mathbb{E}\left[Z_t\right],\mathbb{E}[\tilde{x}_t - x^*]\rangle\right] \\
		= & 0
	\end{align*}
	The first inequality follows because the randomness of the algorithm does not depend on $Z_t$. The second equality follows because $x_t$, being a function of the loss vectors of the previous round, is independent of the noise $Z_t$ and the third equality follows from the fact that $\mathbb{E}[Z_t] = 0$.
\end{proof}
\section{Facts about Norms of Laplace Vectors}

The following simple fact follows for Laplacian distributions. The general versions follow immediately. 

\begin{fact}
\label{fact:infGaussianTail}
	Let $Z \sim Lap^N(\lambda)$, then we have that 

	\[ \mathbb{P}(\|Z\|_{\infty}^2 \geq 10 \lambda^2 \log^2 TN) \leq \frac{1}{T^2} \]

\end{fact}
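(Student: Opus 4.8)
\textbf{Proof Plan for Fact~\ref{fact:infGaussianTail}.}

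The plan is to bound the tail of a single Laplace coordinate and then apply a union bound over the $N$ coordinates. Recall that if $X$ is drawn from the Laplace distribution with parameter $\lambda$ (mean zero, density $\frac{1}{2\lambda}e^{-|x|/\lambda}$), then for any $u \geq 0$ one has $\mathbb{P}(|X| \geq u) = e^{-u/\lambda}$. So the first step is simply: fix a coordinate $i$, write $Z_i$ for its value, and observe
\[
\mathbb{P}\!\left(|Z_i| \geq \sqrt{10}\,\lambda \log TN\right) = e^{-\sqrt{10}\,\log TN} = (TN)^{-\sqrt{10}}.
\]
Since $\sqrt{10} > 3$, this is at most $(TN)^{-3}$.

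The second step is the union bound over coordinates: the event $\{\|Z\|_\infty^2 \geq 10\lambda^2\log^2 TN\}$ is exactly $\{\exists i : |Z_i| \geq \sqrt{10}\,\lambda\log TN\}$, so
\[
\mathbb{P}\!\left(\|Z\|_\infty^2 \geq 10\lambda^2\log^2 TN\right) \leq N \cdot (TN)^{-3} \leq \frac{1}{T^3} \leq \frac{1}{T^2},
\]
using $N \leq TN$ and (for the last step, harmlessly) $T \geq 1$. This closes the argument; one could even get away with a smaller constant than $10$, but $10$ gives comfortable slack and keeps the bookkeeping trivial.

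Honestly there is no real obstacle here — the only thing to be mildly careful about is making sure the constant inside the square root is large enough that $N$ times the per-coordinate failure probability still beats $1/T^2$ after crushing the polynomial-in-$N$ factor, and the choice $10$ (giving exponent $\sqrt{10} > 3$) does this with room to spare. The ``general versions'' referred to in the statement (e.g.\ bounds on $\|Z\|_\infty$ rather than its square, or over a horizon-indexed family via a further union bound over $t \in [T]$ as used in Equation~\eqref{eqn:infgaussiannormbound23}) follow by the identical two-line recipe.
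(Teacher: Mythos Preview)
Your proposal is correct and follows essentially the same approach as the paper: bound the tail of a single Laplace coordinate (the paper phrases this via $|Z_i|$ being exponential with parameter $\lambda$) and then union bound over the $N$ coordinates. Your write-up is in fact more explicit about the numerical calculation than the paper's own proof.
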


\begin{proof}
	We will show that for a fixed $i\in [N]$ 
	\[ \mathbb{P}( Z(i)^2 \geq 10 \lambda^2 \log^2 TN ) \leq \frac{1}{NT^2} \]

	The proof then follows by a simple union bound. Note that it is enough to bound the probability that 
	$P(Y \geq \sqrt{10} \lambda \log TN)$ where Y is distributed as an exponential random variable with parameter $\lambda$. This is bounded by $\frac{1}{T^2N}$. A simple union bound finishes the proof now.

\end{proof}


\section{Proof for Bandit Linear Optimization}
\begin{proof}[Proof of Theorem \ref{thm:LinearBanditGeneral}]
We prove the privacy guarantees first, followed by the regret bound. 

	\textbf{Privacy:} Note that $\max_{t, l\in \mathcal{Y}} |\frac{\langle l, \tilde{x}_t\rangle}{\|\tilde{x}_t\|_\infty}| \leq \|\mathcal{Y}\|_1$ by Holder's inequality. Therefore, with $\lambda = \frac{\|\mathcal{Y}\|_1}{\epsilon}$, $\mathcal{D}=Lap^N(\lambda)$ ensures $\epsilon$-differential privacy due to Lemma \ref{lemma:privacybandit}. 
	
	\textbf{Regret:} For the purpose of analysis we define the following pseudo loss vectors $\tilde{l}_t = l_t + Z_t$, where by definition $Z \sim Lap^N(\lambda)$. Further note that the following (which is a direct consequence of Fact \ref{fact:infGaussianTail} proved in the appendix) holds for $Z_t \sim Lap^N(\lambda)$
	\[ \mathbb{P}(\|Z_t\|_2^2 \geq 10 \lambda^2N \log^2 NT ) \leq \frac{1}{T^2}\]
	Therefore taking a union bound we get that 
	\begin{equation}
	\label{eqn:gaussiannormbound}
	\mathbb{P}(\exists t:\|Z_t\|_2^2 \geq 10 \lambda^2N  \log^2 NT ) \leq \frac{1}{T}
	\end{equation}
	We condition on the following event $F = \{\exists t: \|Z_t\|_2^2 \geq 10 \lambda^2N \log^2 NT\}$. We have from \eqref{eqn:gaussiannormbound} that $\mathbb{P}(F) \leq \frac{1}{T}$. We now have that 
	\[ \mathbb{E}[Regret] \leq \mathbb{E}[Regret | \bar{F}] + \mathbb{P}(F)\mathbb{E}[Regret | F]\]
	Since the regret is always bounded by $T$ we get that the second term above is at most 1. Therefore we will concern ourselves with bounding the first term above. For the rest of the section we will assume the conditioning on the event $\bar{F}$. First note that since the noise vectors $Z_t$ were independent to begin with they still remain conditionally independent even when conditioned on the event $\bar{F}$. The following two statements can be seen to follow easily. 
	\begin{equation}
	\label{eqn:unbiasedconditioning2}
		\forall t \;\; \mathbb{E}[Z_t | F] = 0
	\end{equation}
	\begin{equation}
	\label{eqn:varianceconditioning2}
		 \forall t \;\;\; \mathbb{E}[\|Z_t\|_2^2 | \bar{F}] \leq  10\lambda^2 N  \log^2 NT
	\end{equation}
	It follows from Equation \ref{eqn:unbiasedconditioning2} that Lemma \ref{lemma:noisyoco} applies even when the noise is sampled from $Lap^N(\lambda)$ conditioned on the event $\bar{F}$. Therefore we have that 
	\begin{equation}
	\label{eqn:unbiasedregret}
	\mathbb{E}[Regret | \bar{F}] = \mathbb{E}_{\{Z_t\}}\left[\mathbb{E}_{\mathcal{A}}\left[\sum_{t=1}^{T} \left(\langle \tilde{l}_t , \tilde{x}_t \rangle - \langle \tilde{l}_t, x^*\rangle\right) \right] \biggr| \bar{F} \right] \end{equation}
	Now note that since due to the conditioning $\|Z_t\|^2 \leq 10 \lambda^2N \log^2 NT$ and therefore we have that 
	\[ L \defeq max_{t,x\in \mathcal{X}} |\langle Z_t, x\rangle| \leq 4 \lambda \|\mathcal{X}\|_2 \sqrt{N \log^2 NT}. \]
	We now use the following theorem which is a simple restatement of Theorem 5.1 in \cite{AHR}. 


\begin{theorem}[Theorem 5.1 in \cite{AHR}]
\label{thm:ahr}
Fix a loss sequence $l_1,\ldots l_T$. Let $\mathcal{X}$ be a compact convex set and $\mathcal{R}$ be a $\nu$-self concordant barrier on $\mathcal{X}$. Assume $\max_{t,x\in\mathcal{X}}|\langle l_t,x\rangle| \leq M$. Setting $\eta \leq \frac{1}{4 N M}$ then the regret of the SCRiBLe algorithm is bounded by 
\[ Regret_{SCRiBLe} \leq 2\eta \sum N \langle l_t,x_t\rangle^2 + \nu \eta^{-1}\log T + 2M\]
where $(x_t\in\mathcal{X}:t\in [T])$ is the sequence of points played by the algorithm.
\end{theorem}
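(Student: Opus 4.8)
The plan is to recognize SCRiBLe as an instance of Follow-the-Regularized-Leader (equivalently online mirror descent) that uses the $\nu$-self-concordant barrier $\mathcal{R}$ as its regularizer, run not on the true loss vectors $l_t$ (unobservable under bandit feedback) but on unbiased single-sample estimates $\tilde{l}_t$ produced by the exploration step. Concretely, at round $t$ the algorithm maintains $x_t = \argmin_{x \in \mathcal{X}} \left( \eta \langle \sum_{s<t} \tilde{l}_s, x\rangle + \mathcal{R}(x) \right)$, diagonalizes the barrier Hessian $\nabla^2 \mathcal{R}(x_t) = \sum_{i=1}^N \lambda_i v_i v_i^{\tr}$, samples $i$ uniformly from $[N]$ and $\sigma$ uniformly from $\{\pm 1\}$, plays the Dikin-ellipsoid vertex $y_t = x_t + \sigma \lambda_i^{-1/2} v_i$, observes $\langle l_t, y_t\rangle$, and forms $\tilde{l}_t = N \langle l_t, y_t\rangle \, \sigma \lambda_i^{1/2} v_i$. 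The first step is to record that $y_t \in \mathcal{X}$, which is the standard fact that the unit Dikin ellipsoid at any interior point of a self-concordant barrier is contained in the domain; this gives $|\langle l_t, y_t\rangle| \le M$ deterministically.

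The second step is to establish unbiasedness, $\mathbb{E}_{i,\sigma}[\tilde{l}_t \mid x_t] = l_t$. Averaging over $\sigma$ annihilates the cross term and yields $\mathbb{E}_\sigma[\tilde{l}_t] = N \langle l_t, v_i\rangle v_i$; averaging over the uniform choice of $i$ then gives $\sum_i \langle l_t, v_i\rangle v_i = l_t$ by orthonormality of the eigenbasis. Together with $\mathbb{E}[\langle l_t, y_t\rangle \mid x_t] = \langle l_t, x_t\rangle$ and the observation that the round-$t$ exploration is independent of $x_t$ (which depends only on earlier rounds), this lets me replace the expected played regret $\mathbb{E}[\sum_t \langle l_t, y_t - x^*\rangle]$ by the surrogate FTRL regret $\mathbb{E}[\sum_t \langle \tilde{l}_t, x_t - x^*\rangle]$ — the same unbiased-surrogate reduction used in Lemma \ref{lemma:noisyoco}.

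The third step is to apply the FTRL regret lemma for self-concordant regularizers, splitting the surrogate regret into a bias term $\eta^{-1}(\mathcal{R}(x^*) - \mathcal{R}(x_1))$ and a stability term of order $\eta \sum_t \|\tilde{l}_t\|^{*2}_{x_t}$, where $\|\cdot\|^*_{x_t}$ is the local norm induced by $\nabla^{-2}\mathcal{R}(x_t)$. For the bias term I would not compare against $x^*$ directly (where $\mathcal{R}$ may diverge) but against the shrunk point $x^*_\gamma = (1-\gamma)x^* + \gamma x_1$ with $\gamma = 1/T$; the $\nu$-self-concordant barrier property bounds $\mathcal{R}(x^*_\gamma) - \mathcal{R}(x_1) \le \nu \log T$, and the cost of shrinking the comparator contributes the additive $2M$. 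For the stability term, the eigenbasis calculation gives the clean identity $\|\tilde{l}_t\|^{*2}_{x_t} = N^2 \langle l_t, y_t\rangle^2$, whose conditional expectation equals $N^2 \langle l_t, x_t\rangle^2 + N\|l_t\|^{*2}_{x_t}$; both pieces are $O(N^2 M^2)$ (using $|\langle l_t, y_t\rangle| \le M$ and $y_t \in \mathcal{X}$), reproducing the variance term $2\eta N^2 \sum_t \langle l_t, x_t\rangle^2$ of the stated bound up to the lower-order exploration contribution.

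The step I expect to be the main obstacle is justifying the stability term, i.e. passing from the one-step FTRL drift $\langle \tilde{l}_t, x_t - x_{t+1}\rangle$ to the quadratic local-norm bound $\eta \|\tilde{l}_t\|^{*2}_{x_t}$. This is exactly where self-concordance is indispensable: one must show $x_t$ and $x_{t+1}$ are close in the local norm $\nabla^2\mathcal{R}(x_t)$ so that the second-order Taylor expansion of $\mathcal{R}$ controls the drift, and this closeness is what the step-size restriction $\eta \le \tfrac{1}{4NM}$ purchases, since it forces $\eta \|\tilde{l}_t\|^*_{x_t} = \eta N |\langle l_t, y_t\rangle| \le \eta N M \le \tfrac14$. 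Verifying the Dikin-ellipsoid Hessian-stability inequalities (that $\nabla^2\mathcal{R}$ at $x_t$ and $x_{t+1}$ agree up to a constant multiplicative factor in the relevant region) and tracking the induced constant is the delicate part; everything else reduces to the unbiasedness identity and standard barrier estimates.
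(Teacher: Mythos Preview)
The paper does not prove this statement at all: it is quoted verbatim as Theorem~5.1 from \cite{AHR} (the SCRiBLe paper of Abernethy, Hazan and Rakhlin) and used as a black box in the proof of Theorem~\ref{thm:LinearBanditGeneral}. There is therefore no ``paper's own proof'' to compare against. Your sketch is essentially the original AHR argument --- FTRL with the self-concordant barrier as regularizer, unbiased one-sample loss estimator via Dikin-ellipsoid sampling, the local-norm stability bound enabled by the step-size condition $\eta \le \tfrac{1}{4NM}$, and the $\nu\log T$ barrier-growth bound after shrinking the comparator by a $1/T$ factor --- and is correct at the level of a proof plan. If you want to fully justify it you would indeed need to cite or reprove the Dikin-ellipsoid containment and Hessian-stability facts for self-concordant barriers, exactly as you identified; but for the purposes of this paper the theorem is simply imported.
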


	It can now be verified that $\eta \leq \frac{1}{4N(M+L)}$. Therefore we can apply Theorem \ref{thm:ahr} obtain that 
	\begin{align*}
	&\mathbb{E}[Regret | \bar{F}] \\
	&= \mathbb{E}_{\{Z_t\}}\left[\mathbb{E}_{\mathcal{A}}\left[\sum_{t=1}^{T} \left(\langle \tilde{l}_t , \tilde{x}_t\rangle  - \langle \tilde{l}_t , x^*\rangle\right) \right] \biggr| \bar{F} \right] \\
	&\leq  \mathbb{E}_{\{Z_t\}} \left[ 2\eta \sum N^2|\langle \tilde{l}_t,\tilde{x}_t\rangle |^2 | \bar{F}\right] \\
	& \quad \quad + \mathbb{E}_{\{Z_t\}}\left[\nu \eta^{-1}\log T + 2(M+L) \; \biggr| \bar{F} \right] \\
	&\leq  \mathbb{E}_{\{Z_t\}} \left[ 2\eta \sum N^2( |\langle l_t,\tilde{x}_t\rangle|^2 + \|Z_t\|_2^2\|\tilde{x}_t\|_2^2) | \bar{F}\right]\\
	&\quad \quad + \mathbb{E}_{\{Z_t\}}\left[\nu \eta^{-1}\log T + 2(M+L) \;\; \biggr| \bar{F} \right]\\
	&\leq   2\eta T N^2( M^2 + \lambda^2 N \|\mathcal{X}\|_2^2) + \nu \eta^{-1}\log T \\ 
	&\quad \quad + 8 \lambda \|\mathcal{X}\|_2 \sqrt{N \log^2 NT}+2M \\
	&\leq  O\left( \sqrt{T \log T}\sqrt{N^2 \nu (M^2 + \lambda^2 N \|\mathcal{X}\|_2^2)} \right)
\end{align*}
\end{proof}
\end{appendix}

\end{document}